\documentclass{article}

\usepackage[english]{babel}

\usepackage[scale=0.8]{geometry}

\usepackage{graphicx} 
\usepackage{subcaption}
\usepackage{url}

\usepackage{nicefrac}
\usepackage{amssymb,latexsym,amsfonts,amsmath,amsthm}
\usepackage{color}
\usepackage{enumerate}
\usepackage{enumitem}
\usepackage{algorithm}
\usepackage{algpseudocode}
\usepackage{bm}
\usepackage{mathtools}
\usepackage[titletoc,title]{appendix}
\usepackage[T1]{fontenc}
\usepackage{dsfont}
\usepackage{bbm}

\usepackage{url}
\usepackage[font=small,labelfont=bf]{caption}
\usepackage{hyperref}[] 
\usepackage{cleveref}
\usepackage{thm-restate}
\usepackage{placeins}
\usepackage{natbib}

\newcommand{\ip}[2]{\langle #1,#2\rangle}
\newcommand{\1}{\mathbf{1}}
\newcommand{\norm}[1]{\left\|#1\right\|}
\newcommand{\mP}{\mathbb{P}}
\newcommand{\cF}{\mathcal{F}}
\newcommand{\E}{\mathbb{E}}

\theoremstyle{plain}
\newtheorem{theorem}{Theorem}[section]

\newtheorem{lemma}[theorem]{Lemma}

\theoremstyle{definition}

\newtheorem{assumption}[theorem]{Assumption}
\theoremstyle{remark}

\def\w{\mathbf{w}}
\def\u{\mathbf{u}}

\def\"u1"{\mathbf{u}_1}
\def\"u2"{\mathbf{u}_2}
\def\w{\mathbf{w}}
\def\g{\mathbf{g}}
\def\R{\mathbb{R}}
\def\x{\mathbf{x}}

\def\E{\mathbb{E}}
\def\P{\mathbb{P}}
\def\calL{\mathcal{L}}

\usepackage[textsize=tiny]{todonotes}

\title{Exponential Convergence of (Stochastic) Gradient Descent for Separable Logistic Regression}
\author{Sacchit  Kale 
	\\
	Undergraduate Programme \\
	Indian Institute of Science, Bangalore.\\
	\texttt{sacchitm@iisc.ac.in}
	\and
	Piyushi Manupriya 
	\\
	Dept. of Computer Science and Automation \\
	Indian Institute of Science, Bangalore.\\
	\texttt{piyushim@iisc.ac.in}
	\and
	Pierre Marion \\
	Inria, Ecole Normale Sup\'erieure \\
	PSL Research University, Paris, France. \\
	\texttt{pierre.marion@inria.fr}
	\and Francis Bach \\
	Inria, Ecole Normale Sup\'erieure \\
	PSL Research University, Paris, France. \\
	\texttt{francis.bach@inria.fr} 
	\and
	Anant Raj \\
	Dept. of Computer Science and Automation \\
	Indian Institute of Science, Bangalore. \\
	\texttt{anantraj@iisc.ac.in} 
}

\begin{document}
	
	\maketitle
	
	\begin{abstract}
		Gradient descent and stochastic gradient descent are central to modern machine learning, yet their behavior under large step sizes remains theoretically unclear. Recent work suggests that acceleration often arises near the edge of stability, where optimization trajectories become unstable and difficult to analyze. Existing results for separable logistic regression achieve faster convergence by explicitly leveraging such unstable regimes through constant or adaptive large step sizes. In this paper, we show that instability is not inherent to acceleration. We prove that gradient descent with a simple, non-adaptive increasing step-size schedule achieves exponential convergence for separable logistic regression under a margin condition, while remaining entirely within a stable optimization regime. The resulting method is anytime and does not require prior knowledge of the optimization horizon or target accuracy. We also establish exponential convergence of stochastic gradient descent using a lightweight adaptive step-size rule that avoids line search and specialized procedures, improving upon existing polynomial-rate guarantees. Together, our results demonstrate that carefully structured step-size growth alone suffices to obtain exponential acceleration for both gradient descent and stochastic gradient descent. 
		
	\end{abstract}
	
	\section{Introduction}
	
	Gradient-based methods form the backbone of modern machine learning optimization.
	Algorithms such as gradient descent (GD), stochastic gradient descent (SGD), and their momentum-based variants are the most widely used tools for training large-scale models, owing to their conceptual simplicity, scalability, and empirical effectiveness.
	Consider an unconstrained optimization problem with objective $\mathcal{L}:\mathbb{R}^d \to \mathbb{R}$,
	\[
	\min_{\w \in \mathbb{R}^d} \mathcal{L}(\w).
	\]
	The gradient descent iteration for this problem is given by
	\[
	\w_{t+1} = \w_t - \eta \nabla \mathcal{L}(\w_t),
	\]
	where $\eta>0$ denotes the step size (learning rate).
	When $\mathcal{L}$ is convex and $L$-smooth, the convergence properties of gradient descent and its variants are well understood and form a cornerstone of classical optimization theory
	\citep{nesterov2018lectures,garrigos2023handbook}.
	
	Classical analyses dictate that stability and convergence of gradient descent require the step size to be sufficiently small, typically $\eta \le 2/L$, where $L=\lambda_{\max}(\nabla^2 \mathcal{L})$ is the smoothness parameter.
	Similarly, stochastic gradient descent—where the true gradient is replaced by an unbiased stochastic approximation—is known to converge under diminishing step-size schedules
	\citep{robbins1951stochastic,garrigos2023handbook}.
	These prescriptions have long guided both theory and practice.
	
	However, the emergence of large-scale machine learning models has revealed a striking mismatch between theory and practice.
	Empirically, practitioners routinely employ learning rates that far exceed classical stability limits, yet both gradient descent and stochastic gradient descent continue to make progress, often driving the training loss down over long horizons.
	Recent works \citep{ahn2022understanding,cohen2021gradient} have shown that, in many neural network training problems, full-batch gradient descent operates in a regime characterized by transient instability followed by sustained loss decrease.
	This phenomenon is now commonly referred to as the \emph{edge of stability}.
	Despite its empirical prevalence, theoretical understanding of optimization dynamics in this regime remains limited
	\citep{arora2022understanding}.
	
	Parallel to these empirical observations, another line of work has explored carefully designed learning-rate schedules that provably accelerate convergence beyond classical gradient descent rates
	\citep{Altschuler2025SilverStepsize}.
	While these methods achieve strong theoretical guarantees, the resulting schedules are often intricate and significantly depart from the simple heuristics used in practice.
	
	A breakthrough in understanding the benefits of large step sizes was provided by \citet{pmlr-v247-wu24b}, who demonstrated that gradient descent with a \emph{very large constant step size} can achieve faster convergence for separable logistic regression.
	Given training data $\{(\x_i,y_i)\}_{i=1}^n$ with labels $y_i\in\{\pm1\}$, consider the logistic loss
	\[
	\mathcal{L}(\w) = \frac{1}{n} \sum_{i=1}^n \ln\bigl(1+\exp(-y_i \x_i^\top \w)\bigr).
	\]
	Under the assumption that the data are linearly separable, i.e., there exists $\w^\star$ such that
	$y_i \x_i^\top \w^\star \ge \gamma$ for all $i$ and some margin $\gamma>0$,
	\citet{pmlr-v247-wu24b} proved that gradient descent with a sufficiently large constant step size converges at rate $\mathcal{O}(1/T^2)$ after an oscillating phase of $\mathcal{O}(T/2)$, improving upon the classical $\mathcal{O}(1/T)$ rate for smooth convex objectives.
	Subsequent works extended these results to obtain exponential convergence rates using adaptive but large step-size schedules
	\citep{Zhang2025MinimaxOC}, and further analyzed large step-size gradient descent for regularized logistic regression
	\citep{wu2025large}.
	
	A unifying theme across these works is that faster-than-classical convergence rates appear to be achievable only by passing through a \emph{non-monotonic and unstable optimization phase}.
	Theoretical analyses therefore require a delicate decomposition of the optimization trajectory into multiple regimes, including an initial edge-of-stability phase, a transition period, and a final stable phase.
	This complexity has limited the scope and generality of existing results.
	For stochastic gradient descent, although large learning rates were shown to provide benefits by \citet{pmlr-v247-wu24b}, the improvements were substantially weaker than those observed for full-batch gradient descent.
	
	In this work, we take a fundamentally different approach.
	We show that fast convergence of separable logistic regression can be achieved \emph{without entering the edge-of-stability regime}.
	Specifically, we prove that gradient descent with a \emph{simple, non-adaptive increasing step-size schedule} achieves exponential convergence for separable logistic regression.
	This rate is strictly faster than the polynomial rates established by \citet{pmlr-v247-wu24b} and matches the exponential rates obtained by \citet{Zhang2025MinimaxOC}, while avoiding adaptive or complex learning-rate schedules.
	
	We also analyze stochastic gradient descent in the separable logistic regression setting. Inspired by the techniques of \citet{Zhang2025MinimaxOC}, we establish exponential convergence of \emph{vanilla SGD} for separable logistic regression under increasing adaptive step sizes.
	To the best of our knowledge, this is the first result showing exponential convergence for SGD in this setting without resorting to line search or specialized adaptive procedures.
	Recent work by \citet{vaswani2025armijo} establishes exponential convergence of SGD using an Armijo line search. However, the proof in the published ICML version contains a technical issue arising from improper conditioning on future randomness. A corrected analysis addressing this issue appeared in a subsequent revision released independently and contemporaneously with this work. In contrast, our results show that standard SGD with increasing (and potentially large) step sizes attains comparable guarantees over a broader tolerance regime, without requiring any line search and without knowledge of the final tolerance level. Moreover, by conditioning on the hitting time, which is adapted to the filtration, we obtain a corrected analysis that avoids any dependence on future randomness and relies solely on events measurable with respect to the filtration. Taken together, our results show that instability is not a prerequisite for acceleration.
	Carefully structured yet simple step-size growth suffices to achieve exponential convergence for both gradient descent and stochastic gradient descent in separable logistic regression.
	
	\paragraph{Contributions.}
	To this end, we make the following contributions in this paper:
	\begin{enumerate}
		
		\item \textbf{Anytime exponential convergence of gradient descent (Section~\ref{sec:gd}).}
		We establish an anytime exponential convergence rate for gradient descent on separable logistic regression under a margin assumption, using a simple non-adaptive, increasing step-size schedule. The method does not require prior knowledge of the optimization horizon or target accuracy. In contrast to existing analyses achieving faster-than-$\mathcal{O}(1/T)$ rates via constant or adaptive large step sizes \citep{pmlr-v247-wu24b,Zhang2025MinimaxOC}, our approach avoids unstable or edge-of-stability regimes altogether. The optimization trajectory remains globally stable throughout, showing that exponential convergence can be achieved with large step sizes without incurring transient instability.
		
		\item \textbf{Exponential convergence of stochastic gradient descent with adaptive step-size (Section~\ref{sec:sgd}).}
		We further show that stochastic gradient descent achieves exponential convergence for separable logistic regression under a margin condition using a lightweight adaptive step-size schedule that does not rely on line search and on the final tolerance level. The adaptation depends only on the observed stochastic loss value at each iteration. This yields strictly faster rates than the polynomial convergence guarantees for SGD in \citet{pmlr-v247-wu24b}.

	\end{enumerate}
	
	\paragraph{Organization of the paper:} The remainder of the paper is organized as follows. Section~\ref{sec:rel} reviews related literature. Section~\ref{sec:SGD_logisitc} presents our main theoretical results, first for gradient descent and then for stochastic gradient descent in separable logistic regression. Section~\ref{sec:expts} reports empirical results, and Section~\ref{sec:conclude} concludes the paper.

	\section{Related Works} \label{sec:rel}
	Classical analyses of gradient descent (GD) for smooth convex loss functions typically employ a fixed step-size $\eta$ as inverse of the global smoothness constant. This choice guarantees monotonic descent of the loss but leads only to sub-linear convergence rate of $\mathcal{O}(1/T)$, with $T$ as the horizon. Obtaining linear convergence rate generally requires the loss $\calL(\cdot)$ to be strongly convex.
	Line-search (backtracking) methods \citep{vaswani2025armijo} adaptively tune the step size to guarantee monotonic descent, but they typically introduce additional per-iteration computational overhead. Moreover, the current SGD analysis in \citet{vaswani2025armijo} contains a minor flaw in the published ICML version which has been now fixed in a revised version contemporaneously and independently. To adapt to the local geometry, Polyak scheme \citep{polyak1969minimization} chooses variable step-sizes, $\eta_t \coloneq \frac{\calL(\w_t)-\calL^\star}{\|\nabla \calL(\w_t)\|^2}$, which requires the knowledge of $\calL^\star$ and only results in $\mathcal{O}(1/T)$ convergence rate for smooth convex functions \citep{he2025new}.
	Logistic loss, a smooth convex function widely used in classification, has motivated a growing body of work aimed at providing rigorous theoretical justification for its empirical success. More recent work has demonstrated that exploiting its local smoothness properties can substantially accelerate convergence. In particular, for logistic regression under linearly separable data, \citet{axiotis2023gradient} result in near exponential convergence (after a transient time) by employing $\eta_t\le \min\left\{\frac{1}{2\mu \calL(\w_t)},\ \frac{1}{\bar \gamma \|\nabla \calL(\w_t)\|}\right\}$, where $\mu$ and $\bar \gamma$ are the constants associated with the local geometric properties of second-order robustness and multiplicative smoothness defined by \citet{axiotis2023gradient}.
	A related but distinct line of research studies adaptive first-order methods that automatically tune step-sizes without explicit curvature information and obtain exponential convergence rate \citep{Malitsky20} but with a dependence on the local strong convexity parameter.
	As large step-sizes for logistic loss with linearly separable data often outperform in practice, recent studies like \citet{pmlr-v247-wu24b} have shown improved convergence rates of the order $\mathcal{O}(1/(\eta T))$, where $\eta=\mathcal{O}(T)$, after an initial \textit{unstable} loss-oscillation phase lasting for $\mathcal{O}(T/2)$ iterations. Such an analysis was extended showing improved convergence rate with adaptive step-sizes by \citet{Zhang2025GradientDC}.
	To the best of our knowledge, such exponential convergence rates for GD with large and non-adaptive step-sizes are not known.
	Moreover, these proof techniques do not extend straightforwardly to stochastic gradient descent (SGD). \citet{pmlr-v247-wu24b} only discuss the initial unstable bound in SGD for logistic regression and give high probability bounds on the average loss as $O\left(\eta/T\right)$. The literature related to step-sizes adapted to local smoothness for SGD is also relatively sparse.
	
	\section{Gradient Descent Schemes for Logistic Loss} \label{sec:SGD_logisitc}
	
	It is by now well understood that gradient descent for logistic regression with linearly separable data exhibits non-standard convergence behavior. In particular, the loss converges to zero while the parameter norm diverges, and the optimization trajectory implicitly aligns with the maximum-margin direction \citep{JMLR:v19:18-188,ji2018risk,nacson2019convergence,pmlr-v132-ji21a,wu2023implicit}. Achieving fast (e.g., linear or near-linear) rates for the loss typically relies on carefully controlling this trajectory.
	\medskip
	Recent works establishing fast rates for logistic regression with large or adaptive step-sizes follow a \emph{two-phase analysis} \citep{pmlr-v247-wu24b, Zhang2025GradientDC}. In the first phase, often referred to as the \emph{edge-of-stability} or \emph{unstable phase}, the algorithm operates with aggressive step-sizes that induce oscillations or non-monotonicity in the loss. This transient instability is essential in these analyses to rapidly increase the norm of the iterate and move the algorithm into a favorable region of the parameter space. Once this region is reached, the dynamics transition into a second, \emph{stable phase}, where the loss decreases monotonically and fast convergence rates can be proved.
	\medskip
	While successful, such two-phase analyses are inherently delicate. They rely on explicitly characterizing when instability occurs, when it ends, and how the algorithm transitions into the stable regime. Moreover, instability is typically an unavoidable feature of these methods: large step-sizes are introduced early, and loss oscillations are a necessary byproduct of accelerating progress.
	\medskip
	In contrast, we show that fast convergence for logistic regression can be achieved using a simple, non-adaptive \emph{increasing step-size scheme} that never enters an unstable regime. Despite step-sizes growing over time, the optimization trajectory remains globally controlled, and loss instability is avoided altogether.
	\medskip
	The convergence behavior of stochastic gradient descent (SGD) for logistic loss with linearly separable data has been shown to be similar to that of gradient descent \citep{pmlr-v89-nacson19a} for the classical choice of small step-size. However, the stochastic setting presents additional challenges that are not readily captured by existing analyses of gradient descent. For SGD, fast or exponential convergence guarantees with large step-size remain far less understood. A recent work by \citet{umeda2025increasing} propose increasing step-size scheme for SGD but do not specialize to logistic loss and do not obtain exponential convergence rate. The results in \citet{pmlr-v247-wu24b} only provide high probability bounds of the order $\mathcal{O}(\eta/T)$ on the average loss in the initial unstable phase. In particular, the two-phase analyses that underpin fast rates in the deterministic setting --- where an initial unstable phase drives rapid growth of the iterate norm followed by a stable phase with monotone loss decrease --- rely crucially on deterministic control of the optimization trajectory and do not carry over to stochastic updates. Our work departs from the two stage analysis with large step-size and rather designs local curvature based step-size for SGD that obtains a near exponential convergence rate.
	
	\subsection{Preliminaries}
	\paragraph{Notations.} We use boldface to refer to vectors/matrices. With dimensions depending on the context, $\mathbf{I}$ refers to the Identity matrix. $\mathbf{1}\{\cdot\}$ refers to the binary indicator random variable. All norms are $\ell_2$ norms unless specified otherwise.
	\paragraph{Properties of the Logistic Loss.}
	We consider the empirical logistic loss
	\begin{equation}
		\label{eq:logistic_loss}
		\calL(\w) := \frac{1}{n} \sum_{i=1}^n \ln\!\left(1 + \exp(-y_i \x_i^\top \w)\right),
	\end{equation}
	where $(\x_i, y_i) \in \mathbb{R}^d \times \{\pm 1\}$. 
	
	We begin by discussing some standard properties of logistic loss that were presented in \citet{pmlr-v247-wu24b}. As a differentiable convex function, logistic loss $\calL(\cdot)$ satisfies $\calL(\w_1) \ge \calL(\w_2)+\nabla \calL(\w_2)^\top(\w_1-\w_2) $ for any $\w_1, \ \w_2\in \R^d$.
	\noindent
	Throughout, we impose the standard separability assumption.
	\begin{assumption}
		\label{assumption-gd}
		The training data satisfy:
		\begin{enumerate}
			\item (\emph{Linear separability}) There exists a unit vector $\w^\star$ and $\gamma > 0$ such that
			$y_i \x_i^\top \w^\star \ge \gamma$ for all $i \in [1, n]$.
			\item (\emph{Normalized features}) $\|\x_i\| \le 1$ for all $i \in [1, n]$.
		\end{enumerate}
	\end{assumption}
	
	A crucial property that the logistic loss satisfies under Assumption \eqref{assumption-gd} is the loss controlling the maximum eigenvalue of the Hessian. Let $\mathbf{H}(\w)$ denote the Hessian for the logistic loss computed at $\w$ and $\lambda_j$'s as the eigenvalues of the (symmetric) Hessian matrix $\mathbf{H}(\cdot)$, $ \lambda_{\rm{max}}(\mathbf{H}(\w)) \leq \sqrt{\sum_{j=1}^d \lambda_j^2} = \|\mathbf{H}\|_F$.
	\begin{align}
		\lambda_{\max}(\mathbf{H}(\w)) &\leq \left\| \sum_{i=1}^n \frac{1}{n} \mathbf{x}_i \mathbf{x}_i^\top \frac{ \exp\left({-y_i\mathbf{x}_i^\top \w}\right)}{(1+\exp\left({-y_i\mathbf{x}_i^\top \w}\right))^2} \right\|_F  \leq  \frac{1}{n}\sum_{i=1}^n \|\mathbf{x}_i\mathbf{x}_i^\top\|_F \frac{ \exp\left({-y_i\mathbf{x}_i^\top \w}\right)}{(1+\exp\left({-y_i\mathbf{x}_i^\top \w}\right))^2} \nonumber \\
		& \stackrel{\rm{Assumption}\ \eqref{assumption-gd}}{\leq} \frac{1}{n}\sum_{i=1}^n \frac{ \exp\left({-y_i\mathbf{x}_i^\top \w}\right)}{(1+\exp\left({-y_i\mathbf{x}_i^\top \w}\right))^2}   \leq \min\left\{1/4, \calL(\w)\right\}.
		\label{lem:Hessian-loss}
	\end{align}
	From the above simplification, we also know that $\calL(\w)$ is a smooth function with a global smoothness constant $L=1/4$.

	This result also gives 1-smoothness of log of the logistic loss:
	\begin{align}
		\nabla^2 \ln \calL(\w) &= \frac{1}{\calL(\w)^2}\left( \mathbf{H}(\w)\cdot \calL(\w)-\left(\nabla \calL(\w)\right)^{\otimes 2}  \right) \preccurlyeq \frac{1}{\calL(\w)}\mathbf{H}(\w) \stackrel{\textrm{From }\eqref{lem:Hessian-loss}}{\preccurlyeq} \frac{1}{\calL(\w)}\calL(\w)\cdot \mathbf{I} = \mathbf{I}.\label{log-l-smooth}
	\end{align}

	Under the linear separability given by Assumption \eqref{assumption-gd}, defining a comparator $\u\coloneq \frac{\beta}{\gamma}\w^\star$ for $\beta>0$, we have the following.
	\begin{equation}
		\calL(\u) \leq \frac{1}{n}\sum_{i=1}^n \exp{(-y_i\x_i^\top \w^\star(\beta/\gamma))} \le \frac{1}{n}\sum_{i=1}^n \exp(-\beta).
		\label{f-scaled-w}
	\end{equation}
	Finally, the self-bounded gradient property also holds with normalized features Assumption \eqref{assumption-gd}:
	\begin{align}
		\|\nabla \calL(\w)\| &= \left\|\frac{1}{n}\sum_{i=1}^n \frac{1}{1+\exp{(y_i\x_i^\top \w)}}(-y_i\x_i)\right\| 
		\leq \frac{1}{n} \sum_{i=1}^n \frac{\left\|y_i\x_i  \right\|}{1+\exp{(y_i\x_i^\top \w)}} \le  \min\left\{1, \calL(\w)\right\}. \label{self-bounded}
	\end{align}
	Now, we discuss our step-size scheme and the corresponding analysis for gradient descent and stochastic gradient descent.
	\subsection{Analysis with Gradient Descent}\label{sec:gd}
	As motivated in previous sections, classical analysis of gradient descent (GD) for smooth convex objectives using a fixed step-size chosen proportional to the inverse of a global smoothness constant often leads to slow convergence. Using larger constant step-sizes requires specifying the time-horizon ($T$) and can also induce loss oscillations up to a reasonably long duration of up to $(T/2)$ \citep{pmlr-v247-wu24b, Zhang2025GradientDC}.
	
	These observations raise a natural question: \textit{can one design a non-adaptive step size schedule that avoids oscillatory behavior  without the computational overhead of line-search while achieving substantially faster convergence?}
	\noindent
	In this section, we answer this question affirmatively. We propose a new deterministic step-size scheme for GD that is fully specified in advance, prevents loss oscillations, and yields convergence that is any time nearly exponential. We first observe that a desirable property of the step-size $\eta_t$ to showcase faster convergence while avoiding loss oscillations is to incorporate the local geometry. Logistic loss with separable data exhibits a self-bounding curvature property, with the maximum eigenvalue of Hessian controlled by the loss (\eqref{lem:Hessian-loss}). This property was leveraged by \citet{axiotis2023gradient,Zhang2025GradientDC} who proposed adaptive step-size schemes. Our work removes the need to for tuning the step-sizes adaptively that comes with more intricate analysis. Instead, the proposed step-size scheme implicitly incorporates the local curvature and consequently ensures monotonically non-increasing loss values.
	\newline
	Consider the following gradient descent (GD) update rule for logistic loss:
	\begin{align}
		\w_0 &= \mathbf{0} \nonumber \\
		\w_{t+1} &= \w_t - \eta_t \nabla \calL(\w_t) ,\ t\ge 0.
		\label{eq:wgd}
	\end{align}
	With a general initialization $\w_0$, our proposed step-size $\eta_t$ is presented as follows
	\begin{align}
		\eta_t \coloneq \begin{cases} 
			\frac{1}{\ln(2)+\|\w_0\|}&,\  t=0\\
			\frac{S_{t-1}}{2\max\left\{2F(\w_0), \ \ln^2(S_{t-1}) \right\}}&, \ t>0,
		\end{cases}
		\label{lrgd}
	\end{align}
	where $S_t\coloneq \gamma^2\sum_{k=0}^{t}\eta_k$, $\gamma$ is the margin of data separation and $F(\w_0) \coloneq \frac{1}{n}\sum_{i=1}^n\exp(-y_i\x_i^\top \w_0)$.
	\newline
	We highlight that the step-size scheme $\eta_t$ defined in \eqref{lrgd} does not depend on local curvature or per-coordinate statistics and depends only on the initialization and $\gamma$. Its evolution is entirely deterministic and depends on a quantity $S_{t-1}$ that grows cumulatively. \\
	\newline
	In particular, during an initial phase up to timestep $\tau$, $\ln^2(S_{t-1})\le F(\w_0), \ t\in [1, \tau]$ and $\ \eta_t \coloneq \frac{S_{t-1}}{2F(\w_0)}$ is strictly increasing in $t$. Following this phase, $\eta_t$ grows as $\frac{S_{t-1}}{2\ln^2(S_{t-1})}$. We show that unlike the case by  \citet{pmlr-v247-wu24b,Zhang2025GradientDC}, our GD dynamics never enters an unstable phase of loss oscillations despite the step-sizes being large. This property follows from our choice of the step-size scheme that ensures $\calL(\w_t)\le \frac{1}{\eta_t}$ described as follows.
	Leveraging 1-smoothness of log of logistic loss \eqref{log-l-smooth}, we first present a proof for the condition ensuring monotonic descent of the loss, originally discussed in the stable phase analysis of gradient descent with constant step-sizes \citet[Lemma 11]{pmlr-v247-wu24b}, adapted to our setting of a variable step-size schedule. From 1-smoothness of log of logistic loss \eqref{log-l-smooth} and the GD update rule \eqref{eq:wgd}, we have that the condition $\calL(\w_t)\le 1/\eta_t\, \forall t\ge 0$, makes the loss values monotonically non-increasing:
	\begin{align}
		\ln{\left(\frac{\calL(\w_{t+1}))}{\calL(\w_t)}\right) } & \le -\eta_t\|\nabla \calL(\w_t)\|^2\left(\frac{1}{\calL(\w_t)}-\frac{\eta_t}{2}\right) \le 0, \label{lemma:monot}
	\end{align}
	where the last inequality uses $\calL(\w_t)\le 1/\eta_t$.\\
	\noindent
	Furthermore, this property enables us to adapt the desirable upper bounds on the loss derived by \citet{pmlr-v247-wu24b} for GD dynamics with constant step-size during the \textit{stable phase}. We present this result below and defer our adapted proof to Appendix \eqref{app-adapted}.
	
	\begin{restatable}{lemma}{WuStable}(Adapted from \citet[Lemma 11]{pmlr-v247-wu24b})
		Suppose $\calL(\w_k)\le \frac{1}{\eta_k}$ holds $\forall k\in [s, t-1]$ for logistic loss, under Assumption \eqref{assumption-gd}, we have 
		$$\calL(\w_t) \le \frac{2F(\w_{s})+ \ln^2(\gamma^2\sum_{k=s}^{t-1}\eta_k)}{\gamma^2 \sum_{k=s}^{t-1}\eta_k},$$
		where $F(\w_{s})\coloneq \frac{1}{n}\sum_{i=1}^n\exp(-y_i\x_i^\top \w_{s})$.
		\label{wu24sp}
	\end{restatable}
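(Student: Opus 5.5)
We will follow the standard split-comparator argument behind \citet[Lemma 11]{pmlr-v247-wu24b}, adapted to time-varying step sizes. Fix $s<t$, set $\eta := \sum_{k=s}^{t-1}\eta_k$, and take the comparator $\u := \w_s + \frac{\ln(\gamma^2\eta)}{\gamma}\w^\star$ (we may assume $\gamma^2\eta>1$; otherwise the claimed bound exceeds $\calL(\w_s)\ge \calL(\w_t)$ and there is nothing to prove, using $\calL\le F$ and monotonicity \eqref{lemma:monot}). The plan is a perturbed mirror-descent/regret bound on $\|\w_k-\u\|^2$, with the stable-phase condition $\calL(\w_k)\le 1/\eta_k$ used to absorb the quadratic term.

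\textbf{Step 1 (one-step expansion).} Write
$$\|\w_{k+1}-\u\|^2 = \|\w_k-\u\|^2 + 2\eta_k\langle \nabla\calL(\w_k),\u-\w_k\rangle + \eta_k^2\|\nabla\calL(\w_k)\|^2 .$$
By convexity the middle term is at most $2\eta_k(\calL(\u)-\calL(\w_k))$. For the last term we use the self-bounding property \eqref{self-bounded}, $\|\nabla\calL(\w_k)\|\le \calL(\w_k)$, together with $\eta_k\calL(\w_k)\le 1$, to get $\eta_k^2\|\nabla \calL(\w_k)\|^2\le \eta_k\calL(\w_k)$. This gives
$$\|\w_{k+1}-\u\|^2 \le \|\w_k-\u\|^2 + 2\eta_k\calL(\u) - \eta_k\calL(\w_k).$$

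\textbf{Step 2 (telescope and monotonicity).} Summing over $k=s,\dots,t-1$ yields
$$\sum_k \eta_k\calL(\w_k) \le \|\w_s-\u\|^2 + 2\eta\,\calL(\u).$$
By \eqref{lemma:monot} the loss is non-increasing on $[s,t]$, so the left side is at least $\eta\,\calL(\w_t)$. Therefore
$$\calL(\w_t)\le \frac{\|\w_s-\u\|^2}{\eta} + 2\calL(\u).$$

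\textbf{Step 3 (comparator evaluation).} First, $\|\w_s-\u\|^2=\ln^2(\gamma^2\eta)/\gamma^2$. Second, since $y_i\x_i^\top\u\ge y_i\x_i^\top \w_s+\ln(\gamma^2\eta)$, we have
$$\calL(\u)\le \frac1n\sum_i \exp(-y_i\x_i^\top\u)\le \frac{F(\w_s)}{\gamma^2\eta}.$$
Substituting both into Step 2 gives
$$\calL(\w_t)\le \frac{\ln^2(\gamma^2\eta)}{\gamma^2\eta}+\frac{2F(\w_s)}{\gamma^2\eta},$$
which is exactly the claim.

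\textbf{Main obstacle.} The delicate part is Step 1: controlling the quadratic term $\eta_k^2\|\nabla\calL(\w_k)\|^2$ with varying $\eta_k$ without invoking global smoothness. Our approach uses only $\|\nabla\calL\|\le\calL$ and the hypothesis $\eta_k\calL(\w_k)\le1$, which give the factor-one absorption above. If constants turn out tighter than expected, the fallback is to use $\|\nabla \calL\|^2\le \calL\cdot\min\{1,\calL\}$ more carefully, or to rescale the comparator by a constant factor. The degenerate case $\gamma^2\eta\le 1$, where the logarithm is nonpositive, also needs a short separate remark.
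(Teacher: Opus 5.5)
Your proposal is correct and follows the paper's proof essentially step for step: the same comparator $\w_s+\gamma^{-1}\ln\bigl(\gamma^2\sum_{k=s}^{t-1}\eta_k\bigr)\,\w^\star$, the same absorption $\eta_k^2\|\nabla\calL(\w_k)\|^2\le\eta_k^2\calL(\w_k)^2\le\eta_k\calL(\w_k)$, telescoping, and monotonicity to pass from $\sum_k\eta_k\calL(\w_k)$ to $\calL(\w_t)\sum_k\eta_k$. Your separate remark for $\gamma^2\sum_k\eta_k\le 1$ is a genuine improvement: there the coefficient of $\w^\star$ is nonpositive, so the margin bound $\calL(\mathbf{u})\le F(\w_s)/(\gamma^2\sum_k\eta_k)$ used in the paper is not justified, and the paper's argument does not cover this regime even though Theorem~\ref{thm:GD} invokes the lemma while $S_{t-1}<1$; your observation that the right-hand side then exceeds $2F(\w_s)\ge\calL(\w_s)\ge\calL(\w_t)$ closes that case.
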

	\noindent
	A careful application of this lemma allows us to derive exponential convergence guaranties for gradient descent on the logistic loss as discussed below.
	
	\begin{restatable}{theorem}{GD}
		\label{thm:GD}
		Consider the GD update rule \eqref{eq:wgd} with the proposed step-size for logistic regression and w.l.o.g.~consider initialization $\w_0=\mathbf{0}$.
		Under Assumption \eqref{assumption-gd}, we prove the following
		\begin{enumerate}
			\item $\forall t \ge 0$, $\calL(\w_t)\le \frac{1}{\eta_t}$ ensuring monotonically decreasing loss iterates.
			\item There exist constants $c, C>0$ depending only on $\gamma$ and the initialization such that $\forall t\ge 1$,
			\begin{equation}
				\calL(\w_t) \le \frac{Ct^{2/3}}{\exp(ct^{1/3})} = \exp(-\Omega(t^{1/3})).
			\end{equation}
		\end{enumerate}
	\end{restatable}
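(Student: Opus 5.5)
Part 1 will be proved by strong induction on $t$, using Lemma~\ref{wu24sp} with $s=0$. With $\w_0=\mathbf{0}$ we have $F(\w_0)=1$ and $\calL(\w_0)=\ln 2$. Since $\eta_0=1/\ln 2$, the base case $\calL(\w_0)\le 1/\eta_0$ holds with equality.

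For the inductive step, assume $\calL(\w_k)\le 1/\eta_k$ for all $k\le t-1$. Lemma~\ref{wu24sp} on $[0,t-1]$ then gives
\[
\calL(\w_t)\le \frac{2F(\w_0)+\ln^2(S_{t-1})}{S_{t-1}}\le \frac{2\max\{2F(\w_0),\ln^2(S_{t-1})\}}{S_{t-1}}=\frac{1}{\eta_t},
\]
because $a+b\le 2\max\{a,b\}$ and $2F(\w_0)\le \max\{2F(\w_0),\cdot\}$. This is exactly the definition \eqref{lrgd}, and it explains the factor $2$ in the schedule. Monotone decrease then follows from \eqref{lemma:monot}.

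For Part 2, the same bound yields $\calL(\w_t)\le 1/\eta_t$ for every $t$, so it suffices to show that $\eta_t$ (equivalently $S_t$) grows like $\exp(\Omega(t^{1/3}))$. The recursion is
\[
S_t=S_{t-1}+\gamma^2\eta_t=S_{t-1}\Bigl(1+\frac{\gamma^2}{2\max\{4,\ln^2 S_{t-1}\}}\Bigr).
\]
\begin{itemize}
\item In the first phase, $\ln^2 S_{t-1}\le 4$, and $S_t$ grows geometrically by the factor $1+\gamma^2/8$. This phase therefore lasts only $O(\gamma^{-2})$ steps, or none at all if $S_0=\gamma^2/\ln 2$ is already large.
\item Afterwards, set $u_t=\ln S_t$. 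Then $u_t-u_{t-1}=\ln(1+\gamma^2/(2u_{t-1}^2))\ge c'\gamma^2/u_{t-1}^2$, using $\ln(1+x)\ge x/(1+x)$ and $u_{t-1}\ge 2$ so that $x\le\gamma^2/8\le 1/8$.
\item Hence $u_t^3-u_{t-1}^3\ge 3u_{t-1}^2(u_t-u_{t-1})\ge 3c'\gamma^2$. Summing gives $u_t\ge (u_{\tau}^3+3c'\gamma^2(t-\tau))^{1/3}\ge c\,t^{1/3}-O(1)$, i.e.\ $S_t\ge \exp(c t^{1/3})/C'$.
\end{itemize}

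Finally, plugging this growth back in gives
\[
\calL(\w_t)\le \frac{2\max\{4,\ln^2 S_{t-1}\}}{S_{t-1}}.
\]
The function $x\mapsto \ln^2 x/x$ is decreasing for $x\ge e^2$, so we may substitute the lower bound $S_{t-1}\ge \exp(c(t-1)^{1/3})/C'$. This gives $\calL(\w_t)\lesssim t^{2/3}\exp(-ct^{1/3})$, after absorbing constants and handling small $t$ and the early phase by enlarging $C$.

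The main obstacle is the lower bound on the growth of $u_t$. The cube-telescoping trick is a discrete analogue of $\dot u=\gamma^2/(2u^2)$, which has solution $u\sim t^{1/3}$; it needs care in the transition between phases and in checking that the $\ln(1+x)$ linearization is valid.
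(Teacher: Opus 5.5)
Your Part 1 is the paper's strong induction via Lemma~\ref{wu24sp} with $s=0$; the base case holds with equality, as you note.

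\textbf{Part 2: a lighter route than the paper's.} You use the same cube-telescoping as Lemma~\ref{lem:S-lower}, but you need less machinery:
\begin{itemize}
\item The paper tracks the phase boundaries $\tau_1,\tau_2$ explicitly (Lemmas~\ref{lem:tau1-bound} and~\ref{lem:tau_bound}). It also controls the numerator $\ln^2(S_{t-1})$ separately through the upper-growth Lemma~\ref{lem:S-upper-bound}.
\item You need only the lower bound on $S_{t-1}$ together with the monotonicity of $x\mapsto \ln^2 x/x$ on $[e^2,\infty)$.
\item You absorb all pre-asymptotic steps into $C$. This is legitimate: monotonicity gives $\calL(\w_t)\le \ln 2$ on any initial window whose length depends only on $\gamma$.
\end{itemize}
Your argument is shorter and avoids the upper-bound lemma entirely. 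The paper's route, in exchange, yields explicit $c$ and $C$.

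\textbf{Two corrections.}

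First, with $\w_0=\mathbf{0}$ you have $2F(\w_0)=2$. The recursion is therefore $S_t=S_{t-1}\bigl(1+\gamma^2/(2\max\{2,\ln^2 S_{t-1}\})\bigr)$, not $\max\{4,\cdot\}$. This is harmless: you only use lower bounds on the growth factor, and your regime $\ln S_{t-1}\ge 2$ lies where the max equals $\ln^2 S_{t-1}$ anyway.

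Second, and more substantively, your phase description omits the regime $\ln S_{t-1}<-2$.
\begin{itemize}
\item Since $\gamma\le 1$, $S_0=\gamma^2/\ln 2\le 1/\ln 2$ is never ``already large''.
\item For $\gamma\lesssim 0.3$ one has $\ln S_0<-2$. The trajectory then starts with $\ln^2 S_{t-1}$ large because $S_{t-1}$ is tiny, not because it is big.
\item In that regime the growth factor is smaller than $1+\gamma^2/8$. Your increment bound and cube step, which assume $u_{t-1}\ge 2$, do not cover it.
\end{itemize}
The patch is the content of the paper's Lemma~\ref{lem:tau1-bound}. In this regime $|\ln S_{t-1}|\le |\ln S_0|$, so each step multiplies $S$ by at least $1+\gamma^2/(2\ln^2 S_0)$. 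The regime therefore ends after $O(|\ln S_0|^3/\gamma^2)=O(\gamma^{-2}\ln^3(1/\gamma))$ steps. That is still a constant depending only on $\gamma$, so your ``enlarge $C$'' step goes through. However, the total burn-in is not $O(\gamma^{-2})$, and this case must be stated rather than skipped.
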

	\noindent
	We provide the proof in Appendix \eqref{app-gd} with a proof sketch as follows.
	\begin{proof}[Proof sketch.]
		The argument consists of two conceptual steps: a uniform inductive control of the loss along the trajectory, followed by a quantitative analysis of the resulting growth dynamics.\\
		\noindent
		We first show that $
		\mathcal{L}(\w_t)\;\le\;\frac{1}{\eta_t}\qquad \text{for all } t\ge 0,
		$ using strong induction. The base case holds by construction of $\eta_0$ together with the elementary bound
		$\mathcal{L}(\w)\le \ln(2)+\|\w\|$.
		For the inductive step, assume $\mathcal{L}(\w_k)\le 1/\eta_k$ for all $k\le t-1$. Invoking Lemma~\eqref{wu24sp} with $s=0$, we upper bound $\mathcal{L}(\w_t)$ by an explicit function of
		$
		S_{t-1}\coloneqq \gamma^2\sum_{k=0}^{t-1}\eta_k,$  
		involving only $S_{t-1}$ and $\ln^2(S_{t-1})$.
		The step-size sequence $\{\eta_t\}$ is chosen precisely so that this upper bound equals $1/\eta_t$, thereby closing the induction. Notably, this argument does not rely on any line-search or adaptive acceptance criterion, but instead exploits the self-consistency between the loss bound and the step-size schedule.\\
		\noindent
		We next analyze the evolution of $S_t$, which directly determines the loss decay via the bound established above. By definition,
		\[
		S_t = S_{t-1} + \gamma^2 \eta_t.
		\]
		Under the chosen step-size rule, this yields the nonlinear recurrence
		\[
		S_t
		=
		S_{t-1}
		\left(
		1+\frac{\gamma^2}{2\max\{2F(\w_0),\,\ln^2(S_{t-1})\}}
		\right).
		\]
		Till a finite amount of time  the recurrence has geometric growth.
		Afterwards the dynamics are governed by
		\[
		S_t
		=
		S_{t-1}
		\left(
		1+\frac{\gamma^2}{2\ln^2(S_{t-1})}
		\right).
		\]
		Lemmas~\eqref{lem:S-lower} and~\eqref{lem:S-upper-bound} together imply that in this regime
		\[
		\ln(S_t)=\Theta(t^{1/3}),
		~~\text{equivalently, }
		S_t=\exp\!\bigl(\Theta(t^{1/3})\bigr).
		\]
		Combining this growth estimate with the inductive loss bound yields the stated convergence rate for $\mathcal{L}(\w_t)$.
	\end{proof}

	\subsection{Analysis with Stochastic Gradient Descent}\label{sec:sgd}
	Our next contribution demonstrates the role of local curvature based step-sizes to enable exponential convergence with stochastic gradient descent (SGD).  We analyze the convergence of the SGD update rule
	\begin{align}
		\w_0&=\mathbf{0} \nonumber \\
		\w_{t+1}&=\w_t - \eta_t \nabla\calL_{i_t}(\w_t),\ t\ge 0,
		\label{eq:sgd-update}
	\end{align}
	where $\calL_{i_t}(\w_t) \coloneq  \ln\left(1+\exp{(-y_{i_t}\x_{i_t}^\top \w_t)}\right)$ and the index $i_t\in[1, n]$ is chosen uniformly at random. 
	\medskip
	\noindent
	As in the GD case, our choice of step-size
	is guided by the structural property that the curvature of the logistic loss is controlled by its value, which suggests that large step-sizes can be taken safely in regions of low loss. 
	We propose the step-size scheme defined as 
	\begin{align}
		\eta_t \coloneq \min\left\{\frac{1}{\varepsilon},\ \frac{1}{\calL_{i_t}(\w_t)}\right\},
		\label{eq:lr-sgd}
	\end{align}
	where $\varepsilon$ is the final tolerance level/optimality gap. Henceforth, we will refer to this update rule as Adaptive SGD.
	
	This proposed step-size can be seen as stochastic analogue of the step-size proposed by \citet{axiotis2023gradient} for GD. However, we note that the analysis differs significantly from that of \citet{axiotis2023gradient}. In the stochastic setting, instability may arise from occasional noisy gradients. Hence, unlike GD, the SGD dynamics is not expected to exhibit monotonic descent even when analyzing the expected loss. This calls for a different proof technique. 
	
	\noindent
	We begin by defining
	\begin{equation}\label{eq:tau-def}
		\tau \coloneqq \inf\{t\ge 0:\ \mathcal{L}(\w_t)\le \varepsilon\}.
	\end{equation}
	Let the canonical filtration be
	\begin{equation}\label{eq:filtration}
		\cF_t \coloneqq \sigma(i_0,\dots,i_{t-1}),\qquad t\ge 0,
	\end{equation}
	with $\cF_0=\{\emptyset,\Omega\}$. Since the update is deterministic given $(i_0,\dots,i_{t-1})$ and the data,
	$\w_t$ is $\cF_t$-measurable for every $t$, hence $\mathcal{L}(\w_t)$ is $\cF_t$-measurable. Therefore,
	\[
	\{\tau\le t\}=\{\mathcal{L}(\w_t)\le \varepsilon\}\in \cF_t,
	\]
	so $\tau$ is a stopping time.
	We note that even when conditioning on the event that the algorithm has not yet reached the
	target accuracy, the sampling distribution of the stochastic index remains uniform. This holds because the pre-hitting event, $\{\tau < t\}$, is measurable with respect
	to the past filtration ($\cF_t$) and therefore does not bias the independent sampling process. We provide a formal proof for this in Lemma~\eqref{lem:uniform-stop}. Further, following the non-negativity of logistic loss, conditioned on the event that the stopping condition has not been reached ($\{t < \tau\}$), there must exist at least one datapoint whose loss remains larger than a threshold (Lemma~\eqref{lem:oneovern}). This observation provides the key lower bound on progress used in the analysis.
	
	We now state our main convergence result for Adaptive SGD applied to
	logistic regression under separability. The theorem establishes a
	finite expected hitting-time bound for reaching an $\varepsilon$-suboptimal
	iterate. 
	\begin{restatable}{theorem}{SGD}
		\label{thm:sgd}
		Under Assumptions \ref{assumption-gd} holding almost surely, Adaptive SGD \eqref{eq:sgd-update} on logistic regression with the proposed step-size satisfies,
		\[
		\E[\tau]\;\le\;\frac{2n}{\gamma^2}\,\ln^2\!\Bigl(\frac{4n}{\varepsilon}\Bigr).
		\]
		Consequently, for any $\delta\in(0,1)$,
		\[
		\mP\!\left(
		\tau
		<
		\frac{1}{\delta}\cdot
		\frac{2n}{\gamma^2}\,\ln^2\!\Bigl(\frac{4n}{\varepsilon}\Bigr)
		\right)\ge 1-\delta.
		\]
	\end{restatable}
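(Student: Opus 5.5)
The plan is a stochastic potential (distance-to-comparator) argument with an optional-stopping step. Fix the comparator $\u \coloneq \frac{\beta}{\gamma}\w^\star$ with $\beta \coloneq \ln(4n/\varepsilon)$, so that $\|\u\|^2=\beta^2/\gamma^2$. Since $y_i\x_i^\top\u\ge\beta$, every per-sample loss satisfies $\calL_i(\u)\le \exp(-\beta)=\varepsilon/(4n)$, by the argument behind \eqref{f-scaled-w}. The potential is $D_t\coloneq\|\w_t-\u\|^2$. The goal is to show that before $\tau$ it decreases in conditional expectation by at least $1/(2n)$ per step. Since $D_0=\|\u\|^2$ and $D_t\ge 0$, this gives $\E[\tau]\le 2n\|\u\|^2=\frac{2n}{\gamma^2}\ln^2(4n/\varepsilon)$.

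\textbf{Step 1 (one-step pathwise bound).} Write $g_t=\nabla\calL_{i_t}(\w_t)$. Expanding the update \eqref{eq:sgd-update} gives
\[
D_{t+1}=D_t-2\eta_t\langle g_t,\w_t-\u\rangle+\eta_t^2\|g_t\|^2 .
\]
By convexity of $\calL_{i_t}$, $\langle g_t,\w_t-\u\rangle\ge \calL_{i_t}(\w_t)-\calL_{i_t}(\u)$. The single-sample version of the self-bounding property \eqref{self-bounded} gives $\|g_t\|\le\min\{1,\calL_{i_t}(\w_t)\}$. Combined with $\eta_t\le 1/\calL_{i_t}(\w_t)$ from \eqref{eq:lr-sgd}, this yields $\eta_t^2\|g_t\|^2\le \eta_t\calL_{i_t}(\w_t)$. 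Using also $\eta_t\le 1/\varepsilon$ and $\calL_{i_t}(\u)\le\varepsilon/(4n)$, we get
\[
D_t-D_{t+1}\;\ge\;\eta_t\calL_{i_t}(\w_t)-\frac{1}{2n}\;=\;\min\Bigl\{\frac{\calL_{i_t}(\w_t)}{\varepsilon},1\Bigr\}-\frac{1}{2n}.
\]

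\textbf{Step 2 (conditional expectation before hitting).} On $\{t<\tau\}$, which lies in $\cF_t$, we have $\calL(\w_t)>\varepsilon$. By Lemma~\ref{lem:oneovern}, some index $j$ then has $\calL_j(\w_t)>\varepsilon$; this also follows directly from averaging. By Lemma~\ref{lem:uniform-stop}, $i_t$ is still uniform conditionally on $\cF_t$ on this event, so $i_t=j$ with probability $1/n$. On that draw the minimum equals $1$, and the other terms are nonnegative. Hence
\[
\E[D_t-D_{t+1}\mid\cF_t]\,\1\{t<\tau\}\ge \frac{1}{2n}\1\{t<\tau\}.
\]

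\textbf{Step 3 (optional stopping and Markov).} Define $M_t\coloneq D_{t\wedge\tau}+\frac{t\wedge\tau}{2n}$. Step 2 shows that $M_t$ is a nonnegative supermartingale, so $\E[t\wedge\tau]/(2n)\le \E[M_t]\le M_0=\|\u\|^2$ for every $t$. Letting $t\to\infty$ and applying monotone convergence gives $\E[\tau]\le 2n\beta^2/\gamma^2$, which is the first claim. The high-probability statement then follows from Markov's inequality, $\mP(\tau\ge \E[\tau]/\delta)\le\delta$.

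The main obstacle is Step 2 and the measurability bookkeeping around it. The drift bound must be derived only on the $\cF_t$-measurable event $\{t<\tau\}$, without conditioning on anything about future indices; this is where the flaw in prior SGD analyses lay. It is also important to check that the bound $\eta_t^2\|g_t\|^2\le\eta_t\calL_{i_t}$ and the bound on $\calL_{i_t}(\u)$ both hold pathwise, regardless of which branch of the minimum in \eqref{eq:lr-sgd} is active. Everything else is a routine supermartingale argument.
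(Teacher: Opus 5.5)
Your proposal is correct and follows the paper's proof. It uses the same comparator $\mathbf{u}=\gamma^{-1}\ln(4n/\varepsilon)\,\mathbf{w}^\star$, the same pathwise drift bound from convexity, self-bounding and $\eta_t\mathcal{L}_{i_t}(\mathbf{w}_t)\le 1$, the same $1/n$ sampling argument on the $\mathcal{F}_t$-measurable event $\{t<\tau\}$, and then telescoping and Markov's inequality.

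Your explicit stopped supermartingale $M_t=D_{t\wedge\tau}+(t\wedge\tau)/(2n)$ is slightly more careful than the paper. The paper applies its telescoping lemma to the unstopped $D_t$, but that lemma requires $\mathbb{E}[D_{t+1}\mid\mathcal{F}_t]\le D_t$ also on $\{t\ge\tau\}$. The paper never shows this there: on that event the drift lemma only gives the bound $D_t+\frac{1}{2n}$, and stopping at $\tau$ is exactly what removes this issue.
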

	\noindent
	We provide the proof of this result in Appendix \eqref{app-SGD}. We also give a proof sketch below.
	\begin{proof}[Proof Sketch.]
		
		We consider the potential $D_t=\|\w_t-\u\|^2$ for a comparator $\u= \frac{\w^\star}{\gamma}\ln\bigl (\frac{4n}{\epsilon} \bigr )$. 
		Before the hitting time $\tau$, Lemma~\ref{lem:drift} shows that the process has a uniform negative drift,
		\[
		\mathbb E[D_{t+1}\mid\mathcal F_t]\le D_t-\frac{1}{2n}.
		\] This proof applies to a broader class of non-negative smooth convex binary classification losses with an exponential tail whose gradients have self-bounding property ($\|\nabla \calL(\cdot)\|\le \calL(\cdot)$).
		A telescoping argument (Lemma~\ref{lem:telescoping}) then yields
		$\mathbb E[\tau]\le 2n\,\|\w_0-\u\|^2$. From this we obtain,
		\[
		\mathbb E[\tau]\le \frac{2n}{\gamma^2}\ln^2\!\Bigl(\frac{4n}{\varepsilon}\Bigr).
		\]
		Finally, applying Markov's inequality converts the bound on the expectation into the stated high-probability guarantee.
	\end{proof}
	\subsection{Analysis with Block Adaptive SGD}\label{sec:blocksgd}
	Adaptive SGD \eqref{eq:sgd-update}, with step-size
	$\eta_t = \min\left\{\frac{1}{\varepsilon},\ \frac{1}{\mathcal{L}_{i_t}(\w_t)}\right\}$,
	requires prior knowledge of the target tolerance level $\varepsilon$.
	This dependency can be removed using a doubling-trick strategy. To this end, we propose a
	\emph{Block Adaptive SGD} algorithm that progressively refines the
	effective tolerance without requiring it as an input parameter. The algorithm is described as follows.\newline \noindent
	Fix $\varepsilon_0\in(0,1)$ and define 
	$\varepsilon_k=\varepsilon_0/2^k$, 
	$s_0=0$, 
	$s_{k+1}=s_k+N_k$ for a block length $N_k(>0)$.
	During block $k$ (iterations $t\in\{s_k,\dots,s_{k+1}-1\}$), update
	\begin{equation}
		\w_{t+1}
		=
		\w_t-\eta_t\nabla \mathcal{L}_{i_t}(\w_t),
		\qquad
		\eta_t
		=
		\min\Bigl\{\frac1{\varepsilon_k},\frac1{\mathcal{L}_{i_t}(\w_t)}\Bigr\},
		\label{eq:block_adaptive_update}
	\end{equation}
	where $i_t$ is chosen uniformly at random.\\
	\noindent
	For a target $\varepsilon\in(0,\varepsilon_0]$, define the activated block
	\[
	k_\varepsilon=\min\{k:\ \varepsilon_k\le \varepsilon\}.
	\]
	\medskip
	\noindent
	The Block Adaptive SGD algorithm proceeds in stages (blocks) with progressively decreasing target accuracy levels $\varepsilon_k$. Within each block, stochastic gradient descent is run with an adaptive step-size that scales inversely with the current loss, but is capped by $1/\varepsilon_k$. As the algorithm moves to later blocks, the step-size cap increases, enabling more aggressive updates and driving the loss toward smaller target levels. The activated block $k_\varepsilon$ corresponds to the first stage whose target accuracy is lower than the desired tolerance $\varepsilon$.\\
	\noindent
	The following result establishes an anytime convergence guarantee for
	Block Adaptive SGD obtained via the doubling-trick schedule.
	\begin{restatable}{theorem}{BSGD}
		\label{thm:block-anytime}
		Fix  $\delta \in (0,1)$, let  $\varepsilon_k := \frac{\varepsilon_0}{2^k},$ 
		$N_k
		:=
		\left\lceil
		\frac{4n}{\delta\gamma^2}\,
		\ln^2\!\Bigl(\frac{8n}{\delta\varepsilon_k}\Bigr)
		\right\rceil.$ 
		Let $\varepsilon \in (0,\varepsilon_0]$ be a target accuracy and define
		$k_\varepsilon := \min\{k:\ \varepsilon_k \le \varepsilon\}$, $\bar\varepsilon := \varepsilon_{k_\varepsilon}.$ Define the post-activation hitting time $\tau :=\inf\{t \ge s_{k_\varepsilon}:\ \mathcal L(\w_t) \le \bar\varepsilon\}.$
		\noindent
		Under Assumption \ref{assumption-gd} holding almost surely\; Block Adaptive SGD \eqref{eq:block_adaptive_update} applied to logistic regression satisfies:
		\begin{enumerate}[label=(\roman*),leftmargin=*]
			\item \textbf{(Probability bound)} One has  \;$
			\mathbb P\!\left(
			\min_{0\le t\le s_{k_\varepsilon+1}} \mathcal{L}(\w_t)\le \varepsilon
			\right)\ge 1-\delta.
			$
			\item \textbf{(Expected hitting time inside the activated block)} Let $N:=N_{k_\varepsilon}$.
			Then almost surely,
			\[
			\mathbb E\!\left[(\tau-s_{k_\varepsilon})_+\wedge N\ \middle|\ \mathcal F_{s_{k_\varepsilon}}\right]
			=
			\mathcal O\!\left(\frac{n}{\gamma^2}\ln^2\!\Bigl(\frac{n}{\delta\varepsilon}\Bigr)\right).
			\]
			\item \textbf{(Total iteration complexity)} The total number of iterations up to the end of the activated block satisfies
			\[
			s_{k_\varepsilon+1}
			=
			\mathcal O\!\left(
			\frac{n}{\delta\gamma^2}\,
			\ln^3\!\Bigl(\frac{n}{\delta\varepsilon}\Bigr)
			\right).
			\]
		\end{enumerate}
	\end{restatable}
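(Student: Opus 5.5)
The plan is to run the argument of Theorem~\ref{thm:sgd} inside the activated block $k_\varepsilon$, restarted at time $s_{k_\varepsilon}$ from the random initial point $\w_{s_{k_\varepsilon}}$. The deterministic control of the starting distance comes from the block structure itself. Within block $k$ the update \eqref{eq:block_adaptive_update} is exactly Adaptive SGD \eqref{eq:sgd-update} with tolerance $\varepsilon_k$. The drift lemma (Lemma~\ref{lem:drift}) is applied with comparator $\u_k=\frac{\w^\star}{\gamma}\ln\bigl(\frac{4n}{\varepsilon_k}\bigr)$ and potential $D_t=\|\w_t-\u_k\|^2$. It gives, on the event that $\mathcal L(\w_t)>\varepsilon_k$,
\[
\mathbb E[D_{t+1}\mid\cF_t]\le D_t-\tfrac{1}{2n}.
\]
Because the drift lemma is a one-step conditional statement that only uses that $i_t$ is uniform given $\cF_t$ (Lemma~\ref{lem:uniform-stop}), it applies at any $t$, in particular for $t\ge s_{k_\varepsilon}$.

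\textbf{Step 1 (expected hitting time, part (ii)).} Condition on $\cF_{s_{k_\varepsilon}}$ and apply the telescoping argument of Lemma~\ref{lem:telescoping} to the process stopped at $\tau\wedge(s_{k_\varepsilon}+N)$. The truncation at $N$ makes the sum finite and avoids any integrability issue. This yields
\[
\mathbb E\bigl[(\tau-s_{k_\varepsilon})_+\wedge N\mid \cF_{s_{k_\varepsilon}}\bigr]\le 2n\,\|\w_{s_{k_\varepsilon}}-\u_{k_\varepsilon}\|^2 .
\]
The main obstacle is bounding $\|\w_{s_{k_\varepsilon}}-\u_{k_\varepsilon}\|$ almost surely, since the iterate after earlier blocks is random and could in principle be far from $\u_{k_\varepsilon}$.

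To handle this, we use the pathwise form of the drift inequality. Expanding $\|\w_{t+1}-\u\|^2$ with convexity of $\calL_{i_t}$ and the self-bounding property gives
\[
\|\w_{t+1}-\u\|^2\le\|\w_t-\u\|^2-2\eta_t\bigl(\calL_{i_t}(\w_t)-\calL_{i_t}(\u)\bigr)+\eta_t^2\calL_{i_t}(\w_t)^2 .
\]
With $\eta_t\le 1/\calL_{i_t}(\w_t)$, the right-hand side is at most
\[
\|\w_t-\u\|^2-\eta_t\calL_{i_t}(\w_t)+2\eta_t\calL_{i_t}(\u).
\]
Using $\calL_{i_t}(\u)\le \varepsilon_k/(4n)$ by \eqref{f-scaled-w} and $\eta_t\le 1/\varepsilon_k$, each step increases the distance to $\u_k$ by at most $1/(2n)$ deterministically. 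Combining this with $\|\u_{k}-\u_{k+1}\|=\ln 2/\gamma$ bounds $\|\w_{s_{k_\varepsilon}}-\u_{k_\varepsilon}\|^2$ polynomially in $s_{k_\varepsilon}$ and $\ln(1/\varepsilon)$. This is the step I am least sure gives exactly the stated order.

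As a fallback, I would use a shifted comparator $\u=\frac{\w^\star}{\gamma}\ln(\frac{8n}{\delta\bar\varepsilon})$ together with the bound
\[
\|\w_{s}-\u\|\le\|\w_0-\u\|+\textstyle\sum(\text{increments}),
\]
and absorb lower-order terms into the $\mathcal O(\cdot)$. Since $\ln(\frac{4n}{\bar\varepsilon})\le\ln(\frac{8n}{\delta\varepsilon})$, this gives the claimed $\mathcal O\bigl(\frac{n}{\gamma^2}\ln^2(\frac{n}{\delta\varepsilon})\bigr)$.

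\textbf{Step 2 (probability bound, part (i)).} Markov's inequality applied to $(\tau-s_{k_\varepsilon})_+\wedge N$ gives $\mathbb P(\tau\ge s_{k_\varepsilon}+N)\le \mathbb E[\cdot]/N$. The constant $\frac{4n}{\delta\gamma^2}\ln^2(\frac{8n}{\delta\varepsilon_k})$ in $N_k$ is designed so that this ratio is at most $\delta$, provided the Step~1 bound is $\frac{2n}{\gamma^2}\ln^2(\frac{8n}{\delta\varepsilon_k})$ up to a factor $2$. On the complement event, some $t\le s_{k_\varepsilon+1}$ has $\mathcal L(\w_t)\le\bar\varepsilon\le\varepsilon$.

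\textbf{Step 3 (total complexity, part (iii)).} We have $s_{k_\varepsilon+1}=\sum_{k=0}^{k_\varepsilon}N_k$. Each $N_k$ is at most $1+\frac{4n}{\delta\gamma^2}\ln^2(\frac{8n}{\delta\varepsilon_0}2^{k})$, and $k_\varepsilon=\lceil\log_2(\varepsilon_0/\varepsilon)\rceil$. Bounding each term by the last one gives
\[
s_{k_\varepsilon+1}\le (k_\varepsilon+1)\Bigl(1+\frac{4n}{\delta\gamma^2}\ln^2\bigl(\tfrac{16n}{\delta\varepsilon}\bigr)\Bigr)=\mathcal O\Bigl(\frac{n}{\delta\gamma^2}\ln^3\bigl(\tfrac{n}{\delta\varepsilon}\bigr)\Bigr).
\]
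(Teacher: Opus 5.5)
Your Step~1 (telescoping the stopped drift inside the activated block, conditionally on $\cF_{s_{k_\varepsilon}}$), the Markov step in Step~2, and Step~3 match the paper. The gap is the bound on $\|\w_{s_{k_\varepsilon}}-\u\|^2$, which carries the whole argument, and your primary route cannot deliver it.

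With comparators $\u_k=\frac{\w^\star}{\gamma}\ln(\frac{4n}{\varepsilon_k})$, the per-step growth $2\eta_t\calL_{i_t}(\u_k)\le\frac{1}{2n}$ accumulates over the earlier blocks. The best bound your bookkeeping certifies is therefore at least $\|\u_0\|^2+\sum_{j<k_\varepsilon}N_j/(2n)$. So $2n\|\w_{s_{k_\varepsilon}}-\u_{k_\varepsilon}\|^2$ is controlled only at order $\sum_{j<k_\varepsilon}N_j\ge N_{k_\varepsilon-1}$, which is comparable to $N$ as soon as $k_\varepsilon\ge 1$. Keeping the single comparator $\u_{k_\varepsilon}$ throughout only improves this to $\sum_{j<k_\varepsilon}2^{j-k_\varepsilon}N_j$, still a constant fraction of $N$.

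Consequently the Markov ratio in (i) is bounded only by a constant independent of $\delta$, not by $\delta$. Part (ii) also picks up an extra factor $1/\delta$, plus a $\ln(1/\varepsilon)$ factor if you switch comparators. The root cause is that $\calL(\u_k)\le\varepsilon_k/(4n)$ does not scale with $\delta$, while $N$ does.

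Your fallback names the right comparator (it is the paper's choice), but the computation that makes it work is missing and is misdescribed. Fix $\u=\frac{1}{\gamma}\ln(\frac{8n}{\delta\bar\varepsilon})\w^\star$ from time $0$, so that $\calL_i(\u)\le\frac{\delta\bar\varepsilon}{8n}$ for every $i$. Drop the nonpositive term $-\eta_t\calL_{i_t}(\w_t)$ in the pathwise inequality. Then use $\eta_t\le 1/\varepsilon_j$ in block $j$, $N_j\le N$, and $\sum_{j<k}\varepsilon_j^{-1}<\varepsilon_k^{-1}$ to get
\[
\|\w_{s_{k_\varepsilon}}-\u\|^2\le\|\u\|^2+\frac{\delta\bar\varepsilon}{4n}\sum_{j<k_\varepsilon}\frac{N_j}{\varepsilon_j}\le\|\u\|^2+\frac{\delta N}{4n}.
\]
This gives $\mP(\tau\ge s_{k_\varepsilon}+N)\le\frac{2n\|\u\|^2}{N}+\frac{\delta}{2}\le\delta$ by the choice of $N$.

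This growth term is not lower order. It consumes exactly half of the $\delta$ budget in (i) and contributes $\frac{\delta N}{2}\approx 2n\|\u\|^2$ to (ii), the same order as the main term, so it cannot simply be absorbed. Also, a triangle inequality on the unsquared norm, with step lengths $\|\w_{t+1}-\w_t\|\le 1$ as increments, would give only $\|\u\|+s_{k_\varepsilon}$, which is useless. You must sum the squared-distance increments $2\eta_t\calL_{i_t}(\u)$.

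Because $\calL_i(\u)\le\frac{\delta\bar\varepsilon}{8n}$ holds for every $i$, the displayed bound is pathwise. That is what justifies the almost-sure form of (ii); the paper writes the same computation in expectation.
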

	\noindent
	A complete proof is provided in Appendix~\eqref{app-BSGD}, and a proof sketch is outlined below.
	\\
	\begin{proof}[Proof Sketch]
		Fix the activated block index $k=k_\varepsilon$ and denote
		$\bar\varepsilon=\varepsilon_k$, $N=N_k$, and $s=s_k$.
		We study the post-block-start hitting time
		\[
		\tau=\inf\{t\ge s:\ \mathcal L(\w_t)\le \bar\varepsilon\}.
		\]
		The proof is based on a drift argument for the squared distance to a
		scaled $\w^\star$ (\ref{assumption-gd}) comparator. Expanding the SGD update and using convexity
		of the logistic loss together with the adaptive step-size yields a pathwise
		inequality of the form
		\[
		\|\w_{t+1}-\u\|^2
		\le
		\|\w_t-\u\|^2
		-\eta_t \mathcal L_{i_t}(\w_t)
		+2\eta_t \mathcal L_{i_t}(\u).
		\]
		Choosing
		\(
		\u=\frac1\gamma\ln\!\bigl(\frac{8n}{\delta\bar\varepsilon}\bigr)\w^\star
		\)
		ensures $\mathcal L(\u)\le \delta\bar\varepsilon/(8n)$.
		Taking conditional expectations and using uniform sampling shows that,
		before the hitting time, the stopped process
		$\bar \w_t=\w_{t\wedge\tau}$ satisfies a uniform negative drift,
		\[
		\mathbb E[\|\bar \w_{t+1}-\u\|^2\mid\mathcal F_t]
		\le
		\|\bar \w_t-\u\|^2-\frac{1}{2n}\mathbf 1\{\tau>t\}.
		\]
		Summing this inequality over the block gives a bound on the expected
		number of iterations before reaching the block target,
		\(
		\mathbb E[(\tau-s)_+\wedge N]\le 2n\,\mathbb E\|\w_s-\u\|^2.
		\)
		A separate growth control argument bounds
		$\mathbb E\|\w_s-\u\|^2$ using only the nonexpansive part of the update
		across previous blocks. Combining these estimates yields
		\(
		\mathbb P(\tau>s_{k+1})\le \delta
		\)
		for the prescribed block length $N_k$.
		Since $\bar\varepsilon\le \varepsilon$, this implies the desired
		high-probability anytime guarantee, and the expectation bound follows
		from the same drift inequality.
	\end{proof}
	We summarize our analysis for GD and SGD by highlighting the role of large step-size in achieving faster convergence for logistic loss with linearly separable data. Our step-size for GD \eqref{lrgd} gives a schedule based only on global problem parameters, without needing line search or adaptivity, giving stable descent of loss and resulting in near exponential convergence. Our step-size choice  for SGD \eqref{eq:lr-sgd} based only on the stochastic loss and block tolerance parameter gives exponential convergence. Finally, the Block Adaptive SGD \eqref{eq:block_adaptive_update} removes the requirement of prior knowledge of loss tolerance level.

	\begin{figure}[t]
		\vskip 0.2in
		\begin{center}
			\centerline{
				\includegraphics[width=\columnwidth]{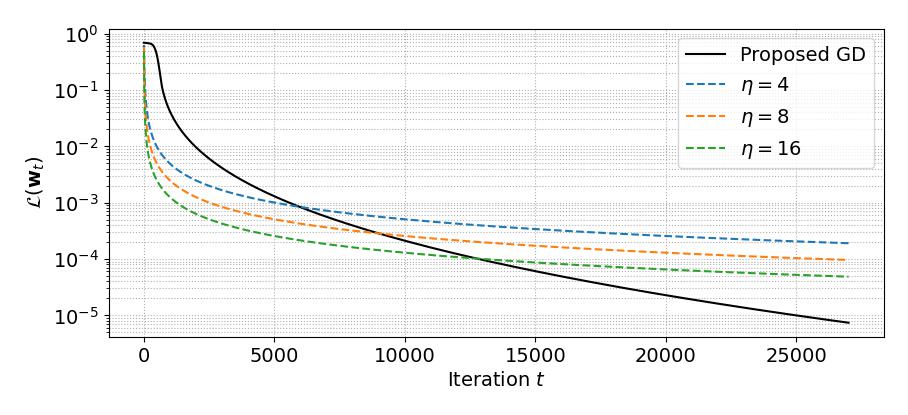}
			}
			\caption{
				Comparison of our GD~\eqref{eq:wgd} and constant step-size
				gradient descent for logistic regression on a synthetic linearly separable
				dataset.
				The plot shows the evolution of the empirical logistic loss
				$\mathcal{L}(\w_t)$ (log scale) as a function of iterations $t$.
			}
			\label{fig:weighted-vs-constant-gd}
		\end{center}
	\end{figure}

	\section{Experimental Results} \label{sec:expts}
	In this section, we discuss empirical results on synthetic and real world datasets to verify our theoretical results from Sec \eqref{sec:gd} and Sec \eqref{sec:sgd}. Our plots are best viewed in color.
	\paragraph{Experiments for GD.}
	We study the GD dynamics for logistic regression on  a synthetically generated 80-dimensional linearly separable dataset following Assumption~\eqref{assumption-gd} with $n=1500$. GD is run with the proposed step-size scheme \eqref{lrgd}.
	Figure~\eqref{fig:gd}-left shows the evolution of the empirical logistic loss $\mathcal{L}(\w_t)$ (in log scale) together with the inverse step size $1/\eta_t$ as a function of the iteration index $t$.
	Figure~\eqref{fig:gd}-right plots $\ln(S_t)$ against $t^{1/3}$ and exhibits a  linear trend. These empirical findings illustrate our theoretical results in Theorem~\eqref{thm:GD}.
	\noindent
	
	In Figure ~\eqref{fig:weighted-vs-constant-gd}, we show the improvement with the proposed GD update scheme compared to constant step-size GD.
	\begin{figure}[t]
		\vskip 0.2in
		\begin{center}
			\centerline{
				\includegraphics[width=0.48\columnwidth]{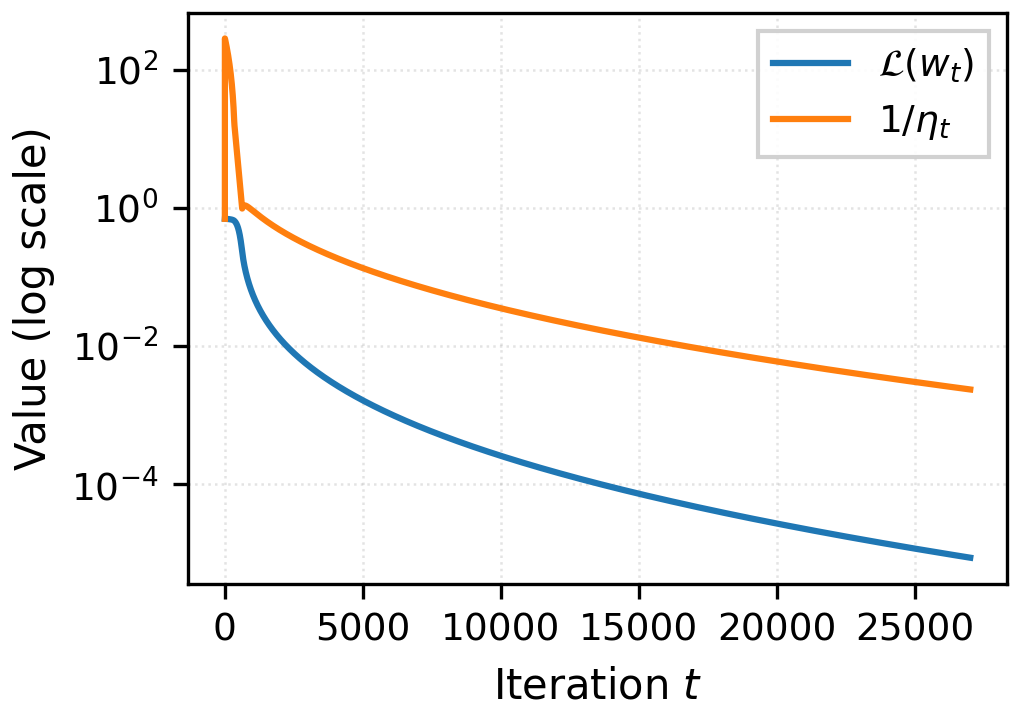}
				\hfill
				\includegraphics[width=0.48\columnwidth]{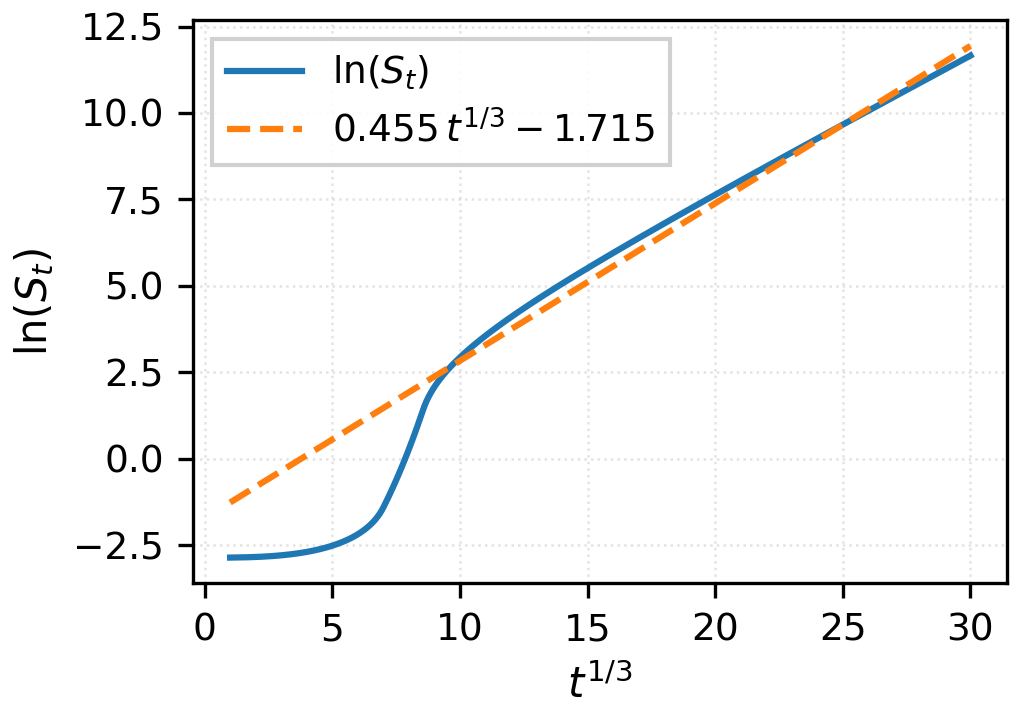}
			}\vspace{-4mm}
			\caption{
				Dynamics of Gradient descent for logistic regression on a synthetic linearly separable dataset.
				\textbf{Left:} Evolution of the empirical loss $\mathcal{L}(\w_t)$ and inverse step size $1/\eta_t$ in log scale.
				\textbf{Right:} Plot of $\ln(S_t)$ versus $t^{1/3}$, validating order of growth of $\ln(S_t)$. 
			}
			\label{fig:gd}
		\end{center}
		\vspace{-10mm}
	\end{figure}

	\begin{figure}[ht!]
		\begin{center}
			{\centerline{\includegraphics[scale=0.5]{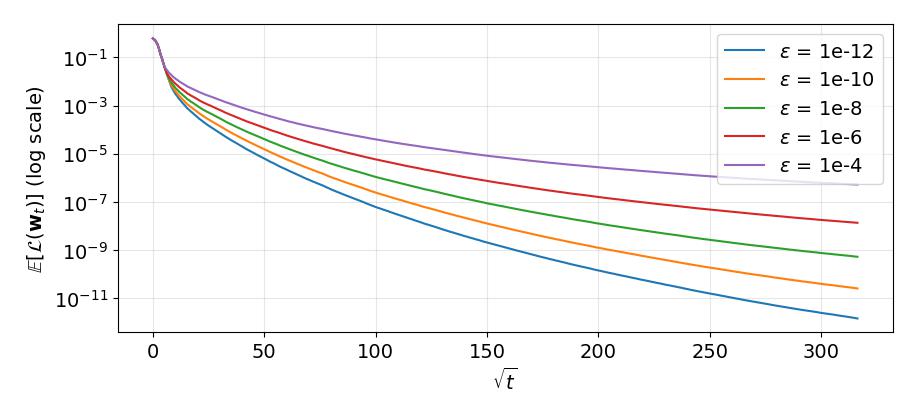}}
				\vspace{-6mm}
				\caption{
					Average loss values with SGD \eqref{eq:sgd-update} for logistic regression over a synthetically generated 10-dimensional data.
				}\label{fig:sgd-synthetic2}
			}
		\end{center} 
		\vspace{-9mm}
	\end{figure}
	\paragraph{Experiments for SGD.} We study the SGD dynamics for logistic regression on 10-dimensional synthetic data and a subset of the 1024-dimensional real world MNIST dataset \citep{mnist} following Assumption~\eqref{assumption-gd}. SGD is run with the proposed step-size scheme \eqref{eq:lr-sgd} for different values of $\varepsilon$. We plot the average empirical logistic loss (in log scale), obtained with 10 seeds that determine the choice of the randomly chosen data index in each SGD iteration, against square-root of the timescale.
	The synthetic data experiment has number of samples $n=5000$ and shows exponential convergence. The results are shown in Figure~\eqref{fig:sgd-synthetic2} which illustrates our theoretical claim of near exponential convergence in Theorem~\eqref{thm:sgd}.
	\newline
	For the experiment with the MNIST dataset, we consider the task of classifying binary digits in the randomly sampled linearly separable subset of $n=50$ samples. Figure~\eqref{fig:sgd-mnist} shows the results obtained for the classification experiments with different combination of digit pairs. Following an initial transient time, the losses show a sharp decrease exhibiting nearly linear trend in log-scale against 
	$\sqrt{t}$.

	\begin{figure}[H]
		\centering
		\begin{subfigure}[b]{0.5\textwidth}
			\centering
			\includegraphics[width=\linewidth]{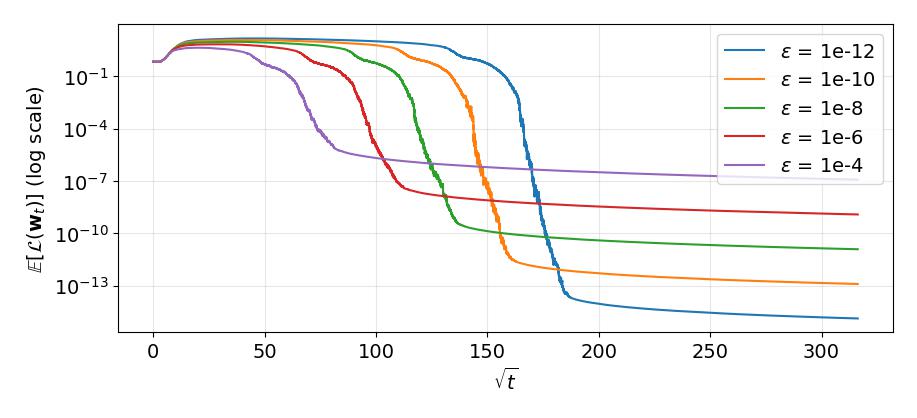}
			\caption{Label `4' vs `9'.}
		\end{subfigure}\hfill
		\begin{subfigure}[b]{0.5\textwidth}
			\centering
			\includegraphics[width=\linewidth]{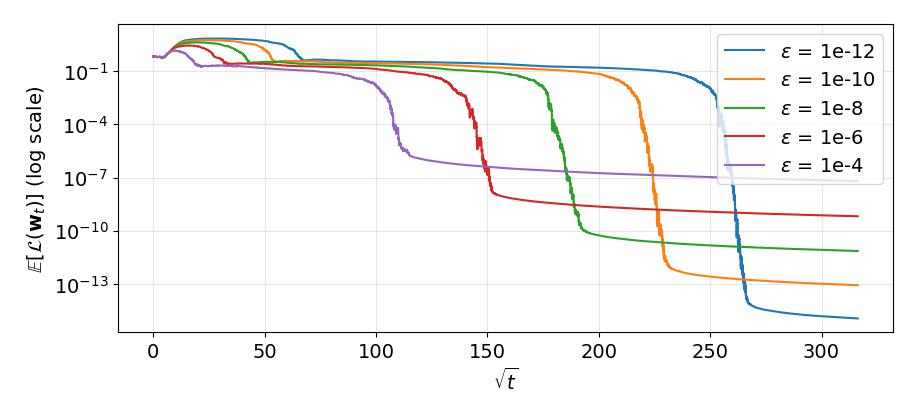}
			\caption{Label `0' vs `8'.}
		\end{subfigure}
		
		\begin{subfigure}[b]{0.5\textwidth}
			\centering
			\includegraphics[width=\linewidth]{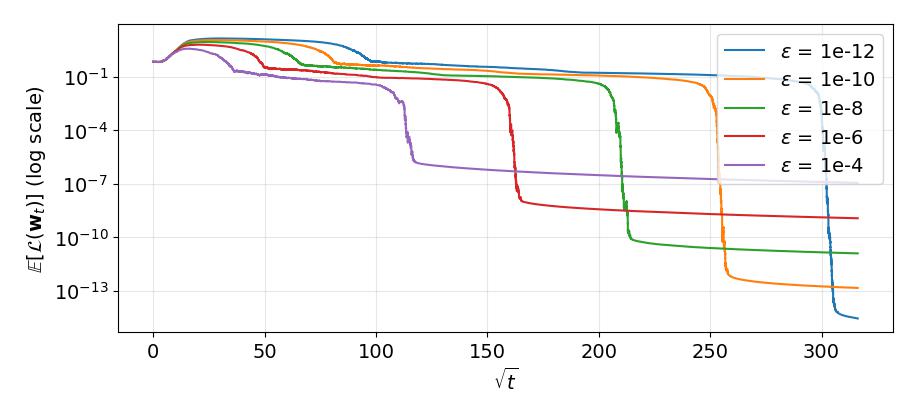}
			\caption{Label `2' vs `6'.}
		\end{subfigure}\hfill
		\begin{subfigure}[b]{0.5\textwidth}
			\centering
			\includegraphics[width=\linewidth]{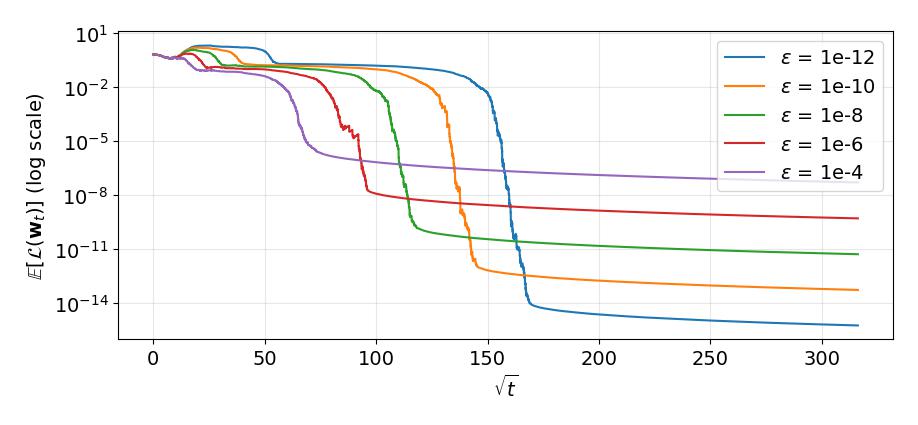}
			\caption{Label `1' vs `5'.}
		\end{subfigure}
		
		\caption{Average loss values with SGD \eqref{eq:sgd-update} for logistic regression over an MNIST subset for different pair of labels}
		\label{fig:sgd-mnist}
	\end{figure}

	\section{Conclusion and Future Work} \label{sec:conclude}
	Our first contribution demonstrates that the proposed choice of $\eta_t$ is pivotal in guaranteeing monotonic, non-increasing loss behavior with GD dynamics for logistic loss over separable data. Building on this, we establish an anytime near-exponential convergence rate for gradient descent with logistic loss. The simplicity of our analysis offers a versatile framework that may inspire broader applications and may yield improved convergence guarantees for general loss functions.
	\newline
	We further analyze the SGD dynamics with a step-size that incorporates the local smoothness of the logistic loss and results in the expected loss converging at a near linear rate. Our analysis for SGD holds for general smooth convex binary classification losses with exponential tail that satisfy the self-bounded gradient property, under the separable data assumption holding almost surely. We leave it as a future work to extend the analysis to a broader class of loss functions.

	\subsection*{Acknowledgments}
	Piyushi Manupriya was initially supported by a grant from Ittiam Systems Private Limited through the Ittiam Equitable AI Lab and then by ANRF-NPDF (PDF/2025/005277) grant. Anant Raj is supported by  a grant from Ittiam Systems Private Limited through the Ittiam Equitable AI Lab, ANRF's Prime Minister Early Career Grant and Pratiksha Trust's Young Investigator Award. This work has received support from the French government, managed
	by the National Research Agency, under the France 2030 program with the reference “PR[AI]RIE-PSAI”
	(ANR-23-IACL-0008).
	
	\bibliography{example_paper}
	\bibliographystyle{plainnat}

	\newpage
	\appendix
	\onecolumn
	\section{Main Proofs}
	\subsection{Analysis with Gradient Descent}\label{app-gd}
	We recall the notations from Sec~\eqref{sec:gd}. The GD update rule is given as
	\begin{align}
		\w_0 &= \mathbf{0} \nonumber \\
		\w_{t+1} &= \w_t - \eta_t \nabla \calL(\w_t) ,\ t\ge 0.
		\label{eq-app:gd-update}
	\end{align}
	
	With a general initialization $\w_0$, our proposed step-size $\eta_t$ is presented as follows
	\begin{align}
		\eta_t \coloneq \begin{cases} 
			\frac{1}{\ln(2)+\|\w_0\|}&,\  t=0\\
			\frac{S_{t-1}}{2\max\left\{2F(\w_0), \ \ln^2(S_{t-1}) \right\}}&, \ t>0,
		\end{cases}
		\label{app-lrgd}
	\end{align}
	where $S_t\coloneq \left(\gamma^2\sum_{k=0}^{t}\eta_k\right)$, $\gamma$ is the margin of data separation and $F(\w_0) \coloneq \frac{1}{n}\sum_{i=1}^n\exp(-y_i\x_i^\top \w_0)$.
	\newline
	Lemma~\ref{lem:S-lower} establishes a  lower bound for
	the recursion \eqref{lemSrec}. It shows that $\ln^3(S_t)$ increases at
	least linearly with time, implying a stretched-exponential growth of
	$S_t$. The constant $C_1(s)$ captures the worst-case initial scaling and
	yields an explicit bound that will be used in the complexity analysis.
	\begin{lemma}(Lower bound on $S_t$.)
		Let $(S_t)_{t\ge s}$ satisfy
		\begin{align}
			S_t = S_{t-1}\left(1+\frac{\gamma^2}{2\ln^{2}(S_{t-1})}\right),\ \ t\ge s+1,
			\label{lemSrec}
		\end{align}
		with $S_s> 1$. Define 
		\[
		C_1(s) \coloneq 1+ \frac{\gamma^2}{2\ln^{2}(S_s)}.
		\]
		Then $\forall t\ge s+1,$
		\begin{align}
			\ln^{3}(S_t) \ge \ln^{3}(S_s) + \frac{3\gamma^2}{2C_1(s)}(t-s).
		\end{align}
		Consequently, for all $t\ge s$,
		\begin{align}
			S_t \ge \exp\left(\left(\ln^{3}(S_s)+\frac{3\gamma^2}{2C_1(s)}(t-s)\right)^{1/3}\right).
		\end{align}
		\label{lem:S-lower}
	\end{lemma}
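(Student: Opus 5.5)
The plan is to work with $x_t \coloneq \ln(S_t)$ and show that each step of the recursion \eqref{lemSrec} increases $x_t^3$ by at least $\frac{3\gamma^2}{2C_1(s)}$; the claim then follows by telescoping. First, because $S_s>1$ and every multiplicative factor in \eqref{lemSrec} exceeds $1$, induction gives $S_t>1$ for all $t\ge s$. Moreover $S_t$ is increasing, so $x_t>0$ and $x_t$ is nondecreasing. This makes $\ln^2(S_{t-1})\ge \ln^2(S_s)$ for every $t\ge s+1$, so that
\[
1\le 1+\frac{\gamma^2}{2\ln^{2}(S_{t-1})}\le C_1(s).
\]

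Set $a\coloneq \frac{\gamma^2}{2x_{t-1}^2}$. Taking logarithms in \eqref{lemSrec} gives $x_t = x_{t-1}+\ln(1+a)$. Combining the elementary bound $\ln(1+a)\ge \frac{a}{1+a}$ with $1+a\le C_1(s)$ yields
\[
x_t - x_{t-1}\ge \frac{\gamma^2}{2C_1(s)\,x_{t-1}^2}.
\]
Next, since $x_t\ge x_{t-1}>0$, we have $x_t^3-x_{t-1}^3 = (x_t-x_{t-1})(x_t^2+x_tx_{t-1}+x_{t-1}^2)\ge 3x_{t-1}^2(x_t-x_{t-1})$. Substituting the previous display, the $x_{t-1}^2$ factors cancel, and we obtain $x_t^3-x_{t-1}^3\ge \frac{3\gamma^2}{2C_1(s)}$.

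Summing this one-step bound from $s+1$ to $t$ gives $\ln^3(S_t)\ge \ln^3(S_s)+\frac{3\gamma^2}{2C_1(s)}(t-s)$. Taking cube roots, which is valid because both sides are positive, and exponentiating gives the second display. That display also holds trivially at $t=s$.

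The only delicate step is obtaining a per-step lower bound on $\ln(1+a)$ that is uniform in $t$. The inequality $\ln(1+a)\ge a/(1+a)$ combined with monotonicity of $S_t$ does this, and it is exactly why the constant $C_1(s)$, evaluated at the initial point $S_s$, appears in the statement.
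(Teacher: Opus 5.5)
Your proposal is correct and follows essentially the same route as the paper: both use $\ln(1+x)\ge \frac{x}{1+x}$, bound $1+\frac{\gamma^2}{2\ln^2(S_{k-1})}\le C_1(s)$ via monotonicity of $\ln(S_t)$, convert this into a per-step increase of $\ln^3(S_t)$ by at least $\frac{3\gamma^2}{2C_1(s)}$, and then telescope. The only cosmetic difference is that you obtain $x_t^3-x_{t-1}^3\ge 3x_{t-1}^2(x_t-x_{t-1})$ from the factorization $a^3-b^3=(a-b)(a^2+ab+b^2)$, whereas the paper uses the mean value theorem.
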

	
	\begin{proof}
		From \eqref{lemSrec}, for $k\ge s+1$
		\begin{align}
			\ln(S_k) - \ln(S_{k-1})
			= \ln\left(1+\frac{\gamma^2}{2\ln^{2}(S_{k-1})}\right).
		\end{align}
		Using $\ln(1+x)\ge \frac{x}{1+x},\ \forall x\ge 0$, we get
		\begin{align}
			\ln(S_k) - \ln(S_{k-1})
			\ge \frac{\gamma^2}{2\ln^{2}(S_{k-1}) + \gamma^2}.
			\label{ukdifflb}
		\end{align}
		Since $\ln(S_{k-1}) \ge \ln(S_s)>0$ and
		\[
		C_1(s) \coloneq 1+ \frac{\gamma^2}{2\ln^{2}(S_s)}
		\implies
		\gamma^2 = 2\bigl(C_1(s)-1\bigr)\ln^{2}(S_s)
		\le 2\bigl(C_1(s)-1\bigr)\ln^{2}(S_{k-1}),
		\]
		we have
		\[
		2\ln^{2}(S_{k-1}) + \gamma^2
		\le 2C_1(s)\ln^{2}(S_{k-1}).
		\]
		Hence, \eqref{ukdifflb} gives
		\begin{align}
			\ln(S_k) - \ln(S_{k-1})
			&\ge \frac{\gamma^2}{2C_1(s)\ln^{2}(S_{k-1})} \nonumber \\
			\implies
			\ln^{2}(S_{k-1})\bigl(\ln(S_k)-\ln(S_{k-1})\bigr)
			&\ge \frac{\gamma^2}{2C_1(s)}.
			\label{eqn:u_init}
		\end{align}
		Also, from the mean value theorem,
		\begin{align}
			\ln^{3}(S_k)-\ln^{3}(S_{k-1})
			& = 3z^2\bigl(\ln(S_k)-\ln(S_{k-1})\bigr)
			\quad \text{for some } z\in[\ln(S_{k-1}),\ln(S_k)] \nonumber\\
			&\ge 3\ln^{2}(S_{k-1})\bigl(\ln(S_k)-\ln(S_{k-1})\bigr).
		\end{align}
		Combining with \eqref{eqn:u_init}, we obtain
		\begin{align}
			\ln^{3}(S_k) - \ln^{3}(S_{k-1})
			\ge \frac{3\gamma^2}{2C_1(s)}.
		\end{align}
		Summing from $s+1$ to $t$,
		\begin{equation}
			\ln^{3}(S_t)
			\ge \ln^{3}(S_s) + \frac{3\gamma^2}{2C_1(s)}(t-s),
		\end{equation}
		where the inequality follows since $(\ln(S_t))$ is increasing.\\
		\noindent
		The lower bound on $S_t$ follows by exponentiating.
	\end{proof}
	\medskip
	\noindent
	Lemma~\ref{lem:S-upper-bound} provides a upper growth bound
	for the recursion \eqref{lemSrec2}. It shows that $\ln^3(S_t)$ grows at
	most linearly in time, yielding a stretched-exponential upper bound on
	$S_t$. Together with Lemma~\ref{lem:S-lower}, this tightly characterizes
	the growth rate of the sequence.
	\newline
	\begin{lemma}(Upper bound on $S_t$.) Let $(S_t)_{t\ge s}$ satisfy
		\begin{align}
			S_t = S_{t-1}\left(1+\frac{\gamma^2}{2\ln^2(S_{t-1})}\right),\ \ t\ge s+1,
			\label{lemSrec2}
		\end{align}
		with $S_s> 1$.  Then $\forall t\ge s$,
		\begin{equation}
			\ln^3(S_t) \le \ln^3(S_s) + C_2(s)(t-s),
		\end{equation}
		where $C_2(s)\coloneq \frac{3\gamma^2}{2}+ \frac{3\gamma^4}{4\ln^3(S_s)}+\frac{\gamma^6}{8\ln^6(S_s)}$.\\
		\newline
		Consequently, for all $t\ge s$,
		\begin{align}
			S_t \le \exp\left(\left(\ln^3(S_s)+C_3(s)(t-s)\right)^{1/3}\right).
		\end{align}
		\label{lem:S-upper-bound}
	\end{lemma}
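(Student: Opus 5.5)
The plan is to work with $u_k := \ln(S_k)$ and bound the one-step increment of $u^3$ from above, then telescope. By the recursion \eqref{lemSrec2}, for $k\ge s+1$,
\[
u_k - u_{k-1} = \ln\Bigl(1+\frac{\gamma^2}{2u_{k-1}^2}\Bigr) \le \frac{\gamma^2}{2u_{k-1}^2} =: \delta_{k-1},
\]
using $\ln(1+x)\le x$. Since $S_s>1$ and the sequence is increasing, $u_{k-1}\ge u_s>0$ for all $k\ge s+1$.

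Next, expand the cube exactly instead of using the mean value theorem. The lower-bound lemma could use the left endpoint, but here we need the right endpoint, so write
\[
u_k^3 - u_{k-1}^3 = 3u_{k-1}^2 d + 3u_{k-1} d^2 + d^3, \qquad d := u_k-u_{k-1}\in[0,\delta_{k-1}].
\]
Every term is nonnegative and increasing in $d$, so we may replace $d$ by $\delta_{k-1}$ to get
\[
u_k^3-u_{k-1}^3 \le \frac{3\gamma^2}{2} + \frac{3\gamma^4}{4u_{k-1}^3} + \frac{\gamma^6}{8u_{k-1}^6}.
\]
Because $u_{k-1}\ge u_s$, the last two terms are largest when $u_{k-1}=u_s$. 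This gives the uniform bound $u_k^3-u_{k-1}^3\le C_2(s)$, which matches the stated constant exactly.

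Summing from $k=s+1$ to $t$ yields $\ln^3(S_t)\le \ln^3(S_s)+C_2(s)(t-s)$; the case $t=s$ is trivial. Taking cube roots (all quantities are positive) and exponentiating gives the bound on $S_t$. The statement writes $C_3(s)$ there, which I read as $C_2(s)$; any $C_3(s)\ge C_2(s)$ works equally well.

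The only real obstacle is keeping the bound uniform in $k$. This requires two facts: monotonicity $u_{k-1}\ge u_s$, which holds because each factor in the recursion exceeds $1$, and positivity $u_s>0$, which comes from $S_s>1$. Together these ensure the negative powers of $u_{k-1}$ are controlled by their values at $s$.
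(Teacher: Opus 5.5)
Your proposal is correct and follows essentially the same argument as the paper: bound the increment $\ln(S_k)-\ln(S_{k-1})\le \gamma^2/(2\ln^2(S_{k-1}))$ via $\ln(1+x)\le x$, expand the cube termwise, use monotonicity $\ln(S_{k-1})\ge \ln(S_s)>0$ to obtain the uniform constant $C_2(s)$, and telescope. Your reading of $C_3(s)$ as $C_2(s)$ in the exponentiated bound is also the intended one, since the paper never defines $C_3(s)$.
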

	\begin{proof}
		From \eqref{lemSrec2}, for all $k\ge s+1$,
		\begin{equation}
			\ln(S_k) - \ln(S_{k-1}) = \ln\left(1+\frac{\gamma^2}{2\ln^2(S_{k-1})}\right).
		\end{equation}
		Using $\ln(1+x)\le x,\ \forall x\ge 0$ and defining $\Delta_k\coloneq \ln(S_k) - \ln(S_{k-1})$, we obtain
		\begin{align}
			\Delta_k \le \frac{\gamma^2}{2\ln^2(S_{k-1})}.
			\label{eqn:deltak}
		\end{align}
		Now,
		\begin{align}
			\ln^3(S_k) - \ln^3(S_{k-1}) & = (\ln(S_{k-1})+\Delta_k)^3 - \ln^3(S_{k-1}) \nonumber \\
			& = 3\ln^2(S_{k-1})\Delta_k + 3\ln(S_{k-1})\Delta_k^2 + \Delta_k^3.
			\label{u-three}
		\end{align}
		We bound the three terms on the RHS using \eqref{eqn:deltak}:-
		\newline
		First term:
		\begin{align}
			3\ln^2(S_{k-1})\Delta_k \le \frac{3\gamma^2}{2}.
		\end{align}
		Second term:
		\begin{align}
			3\ln(S_{k-1})\Delta_k^2
			\le 3\ln(S_{k-1})\left(\frac{\gamma^2}{2\ln^{2}(S_{k-1})}\right)^2
			= \frac{3\gamma^4}{4\ln^{3}(S_{k-1})}.
		\end{align}
		\noindent
		Third term:
		\begin{align}
			\Delta_k^3
			\le \left(\frac{\gamma^2}{2\ln^{2}(S_{k-1})}\right)^3
			= \frac{\gamma^6}{8\ln^{6}(S_{k-1})}.
		\end{align}
		\noindent
		Using these in \eqref{u-three}, for all $k \ge s+1$,
		\begin{align}
			\ln^{3}(S_k) - \ln^{3}(S_{k-1})
			& \le \frac{3\gamma^2}{2}
			+ \frac{3\gamma^4}{4\ln^{3}(S_{k-1})}
			+ \frac{\gamma^6}{8\ln^{6}(S_{k-1})} \nonumber \\
			& \stackrel{(1)}{\le}
			\frac{3\gamma^2}{2}
			+ \frac{3\gamma^4}{4\ln^{3}(S_s)}
			+ \frac{\gamma^6}{8\ln^{6}(S_s)} \nonumber \\
			& \coloneq C_2(s).
		\end{align}
		\noindent
		Inequality (1) uses that since $(S_t)$ is increasing, $(\ln(S_t))$ is increasing and hence for all
		$k \ge s+1$,
		\[
		\ln(S_{k-1}) \ge \ln(S_s) > 0.
		\]
		\noindent
		Summing from $k = s+1$ to $t$ yields for all $t \ge s$,
		\begin{align}
			\ln^{3}(S_t)
			= \ln^{3}(S_s)
			+ \sum_{k=s+1}^t \bigl(\ln^{3}(S_k) - \ln^{3}(S_{k-1})\bigr)
			\le \ln^{3}(S_s) + (t-s)C_2(s).
		\end{align}
		The upper bound on $S_t$ follows by exponentiating.
	\end{proof}
	\begin{lemma}\label{lem:tau1-bound}
		Define
		\[
		\tau_1 \;\coloneqq\; 
		\inf\Bigl\{t\ge 0:\ \ln(S_t) > -\sqrt{2F(\w_0)}\Bigr\}.
		\]
		Then for every integer $t\le \tau_1$ we have
		\[
		\ln^2(S_{t-1})\ge 2F(\w_0),
		\]
		and consequently
		\begin{equation}\label{eq:recursion-pre-tau1}
			S_t
			=
			S_{t-1}\Bigl(1+\frac{\gamma^2}{2\ln^2(S_{t-1})}\Bigr).
		\end{equation}
		Define
		\[
		a \coloneqq 1+\frac{\gamma^2}{2\ln^2(S_0)},
		\qquad
		b \coloneqq 1+\frac{\gamma^2}{4F(\w_0)}.
		\]
		Then for all integers $t\le \tau_1$ the sequence satisfies the one-step bounds
		\begin{equation}\label{eq:onestep-sandwich}
			S_{t-1}a\ \le\ S_t\ \le\ S_{t-1}b,
		\end{equation}
		and hence the geometric growth rate
		\begin{equation}\label{eq:global-sandwich}
			S_0 a^t \ \le\ S_t \ \le\ S_0 b^t.
		\end{equation}
		Moreover,  $\tau_1$ obeys
		\begin{equation}\label{eq:tau1-bounds}
			\frac{-\sqrt{2F(\w_0)}-\ln(S_0)}{\ln(b)}
			\ <
			\tau_1
			\leq
			1+\frac{-\sqrt{2F(\w_0)}-\ln(S_0)}{\ln(a)}
		\end{equation}
		and $\ln(S_{\tau_1})$ satisfies
		\begin{equation}\label{eq:overshoot}
			-\sqrt{2F(\w_0)}
			\ <\
			\ln(S_{\tau_1})
			\ \le\
			-\sqrt{2F(\w_0)}
			+
			\ln(b).
		\end{equation}
	\end{lemma}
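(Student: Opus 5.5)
The plan is to exploit that, before $\tau_1$, the iterates $\ln(S_{t-1})$ are negative and bounded away from zero. This makes the maximum in \eqref{app-lrgd} equal to $\ln^2(S_{t-1})$ and pins the multiplicative growth factor between two constants.

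\textbf{Step 1: the recursion before $\tau_1$.} Fix an integer $1\le t\le\tau_1$. Since $t-1<\tau_1$, the definition of $\tau_1$ as an infimum gives $\ln(S_{t-1})\le-\sqrt{2F(\w_0)}<0$, hence $\ln^2(S_{t-1})\ge 2F(\w_0)$. The maximum in \eqref{app-lrgd} is therefore $\ln^2(S_{t-1})$, so $\eta_t=S_{t-1}/(2\ln^2(S_{t-1}))$. Combined with $S_t=S_{t-1}+\gamma^2\eta_t$, this gives \eqref{eq:recursion-pre-tau1}.

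\textbf{Step 2: the one-step sandwich.} The recursion shows $S_t\ge S_{t-1}$, so by induction $S_0\le S_{t-1}$ for $t\le\tau_1$. Because all these values have nonpositive logarithm, $\ln(S_0)\le\ln(S_{t-1})\le-\sqrt{2F(\w_0)}$. Squaring reverses the order on negatives, giving
\[
2F(\w_0)\le\ln^2(S_{t-1})\le\ln^2(S_0).
\]
Plugging these into the factor $1+\gamma^2/(2\ln^2(S_{t-1}))$ yields exactly $a\le S_t/S_{t-1}\le b$, which is \eqref{eq:onestep-sandwich}. Iterating from $0$ to $t$ gives \eqref{eq:global-sandwich}. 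Here I use $S_0=\gamma^2/\ln 2$ (since $\w_0=\mathbf 0$). I implicitly assume $\ln(S_0)\le-\sqrt{2F(\w_0)}$, i.e.\ $\tau_1\ge1$; otherwise the statements for $t\ge1$ are vacuous and the $t=0$ case is trivial. Since $a>1$, the lower geometric bound forces $\ln(S_t)\to\infty$ along the pre-$\tau_1$ recursion, so $\tau_1<\infty$.

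\textbf{Step 3: bounds on $\tau_1$ and the overshoot.} Take logs in \eqref{eq:global-sandwich}.
\begin{itemize}
\item \emph{Upper bound on $\tau_1$.} At $t=\tau_1-1$ we have $\ln(S_0)+(\tau_1-1)\ln(a)\le\ln(S_{\tau_1-1})\le-\sqrt{2F(\w_0)}$. Rearranging gives the right inequality in \eqref{eq:tau1-bounds}.
\item \emph{Lower bound on $\tau_1$.} At $t=\tau_1$ we have $-\sqrt{2F(\w_0)}<\ln(S_{\tau_1})\le\ln(S_0)+\tau_1\ln(b)$, giving the strict left inequality.
\item \emph{Overshoot.} The lower bound in \eqref{eq:overshoot} is the definition of $\tau_1$. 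For the upper bound, apply the one-step bound at $t=\tau_1$: $\ln(S_{\tau_1})\le\ln(S_{\tau_1-1})+\ln(b)\le-\sqrt{2F(\w_0)}+\ln(b)$.
\end{itemize}

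\textbf{Main obstacle.} There is no real analytic difficulty. The delicate point is bookkeeping: the sandwich must be applied only up to and including $t=\tau_1$, since the hypothesis $\ln(S_{t-1})\le-\sqrt{2F(\w_0)}$ involves $t-1$. One must also justify $\ln^2(S_{t-1})\le\ln^2(S_0)$, which relies on $S$ being increasing while its logarithm stays negative. Finally, the degenerate case $\tau_1=0$ must be handled separately (or excluded by assumption).
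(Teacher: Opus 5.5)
Your proposal is correct and follows the paper's proof essentially step for step: the recursion comes from the definition of $\tau_1$, the sandwich from monotonicity of $S_t$ and negativity of $\ln(S_{t-1})$, and the bounds on $\tau_1$ and $\ln(S_{\tau_1})$ from taking logarithms at $t=\tau_1-1$ and $t=\tau_1$; your direct derivation of the strict lower bound on $\tau_1$ and your explicit argument that $\tau_1<\infty$ are slightly cleaner than the paper's. One correction: the case $\tau_1=0$ is not trivial but must genuinely be excluded, because the upper bounds in \eqref{eq:tau1-bounds} and \eqref{eq:overshoot} can then fail (e.g.\ $\w_0=\mathbf{0}$ and $\gamma^2=e^{-1}\ln 2$ give $\ln(S_0)=-1>-\sqrt{2}$, so $\tau_1=0$, yet $-1>-\sqrt{2}+\ln(1+\gamma^2/4)$), so the hypothesis $\ln(S_0)\le-\sqrt{2F(\w_0)}$ is really needed, and the paper also uses it implicitly through $S_{\tau_1-1}$.
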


	\begin{proof}
		
		By definition of $\tau_1$, for all integers $t\le \tau_1$,
		\[
		\ln(S_{t-1}) \le -\sqrt{2F(\w_0)}
		\quad\Longrightarrow\quad
		\ln^2(S_{t-1}) \ge 2F(\w_0),
		\]
		so $\max\{2F(\w_0),\ln^2(S_{t-1})\}=\ln^2(S_{t-1})$ and \eqref{eq:recursion-pre-tau1} follows.\\
		\newline
		From \eqref{eq:recursion-pre-tau1}, the multiplicative factor is $>1$, hence $(S_t)$ is nondecreasing
		and $S_{t-1}\ge S_0$. Since $S_0\le S_{t-1}\le e^{-\sqrt{2F(\w_0)}}<1$ for all $t\le\tau_1$, the function
		$\ln(\cdot)$ is increasing and negative on this range, giving
		\[
		\ln(S_{t-1}) \ge \ln(S_0)
		\quad\Longrightarrow\quad
		\ln^2(S_{t-1}) \le \ln^2(S_0).
		\]
		Therefore,
		\[
		\frac{\gamma^2}{2\ln^2(S_{t-1})} \ge \frac{\gamma^2}{2\ln^2(S_0)},
		\]
		yielding the lower bound in \eqref{eq:onestep-sandwich}. Also, 
		$\ln^2(S_{t-1})\ge 2F(\w_0)$ implies
		\[
		\frac{\gamma^2}{2\ln^2(S_{t-1})}\le \frac{\gamma^2}{4F(\w_0)},
		\]
		giving the upper bound in \eqref{eq:onestep-sandwich}.
		\newline
		Iterating \eqref{eq:onestep-sandwich} yields \eqref{eq:global-sandwich}.\\
		\newline
		If $S_0b^t < e^{-\sqrt{2F(\w_0)}}$, then by \eqref{eq:global-sandwich},
		$S_t \le S_0b^t < e^{-\sqrt{2F(\w_0)}}$, i.e.\ $\ln(S_t)<-\sqrt{2F(\w_0)}$, so the crossing has not yet
		occurred and $\tau_1>t$. This implies the lower bound in \eqref{eq:tau1-bounds}. We have $S_0a^{\tau_1-1} \leq S_{\tau_1-1} \leq -\sqrt{2F(\w_0)}$. Using this we get the upper bound in \eqref{eq:tau1-bounds}.\\
		\newline
		By definition of $\tau_1$, $\ln(S_{\tau_1})>-\sqrt{2F(\w_0)}$, hence $S_{\tau_1}>e^{-\sqrt{2F(\w_0)}}$,
		which gives the left inequality in \eqref{eq:overshoot}. Since $\tau_1$ is the first time the inequality
		$\ln(S_t)>-\sqrt{2F(\w_0)}$ holds, we have $\ln(S_{\tau_1-1})\le -\sqrt{2F(\w_0)}$, i.e.\
		$S_{\tau_1-1}\le e^{-\sqrt{2F(\w_0)}}$. Applying the one-step upper bound in \eqref{eq:onestep-sandwich} at
		$t=\tau_1$ yields
		\[
		S_{\tau_1} \le S_{\tau_1-1}b\le e^{-\sqrt{2F(\w_0)}}b,
		\]
		which proves the right inequality in \eqref{eq:overshoot}.
	\end{proof}

	\begin{lemma}
		Define $\tau_2 \coloneq \inf\{t\ge 0: \ln(S_{t}) > \sqrt{2F(\w_0)}\}.$
		
		Let $b\coloneq 1+ \frac{\gamma^2}{4F(\w_0)}$.
		\begin{equation}
			\tau_2
			\;<\;
			1+\tau_1 + \frac{\sqrt{2F(\w_0)}-\ln(S_{\tau_1})}{\ln(b)}.
		\end{equation}
		\begin{equation}
			\tau_2
			\;\ge\; \tau_1+
			\frac{\sqrt{2F(\w_0)}-\ln(S_{\tau_1})}{\ln(b)}.
		\end{equation}
		
		Moreover, $\sqrt{2F(\w_0)}\le \ln(S_{\tau_2}) \le \sqrt{2F(\w_0)} + \ln(b).$ and $S_t = S_0b^{t},$ for every $0 \leq 1\leq \tau_2$.
		\label{lem:tau_bound}    
	\end{lemma}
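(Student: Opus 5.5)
Between $\tau_1$ and $\tau_2$ the quantity $\ln(S_{t-1})$ lies in the band $[-\sqrt{2F(\w_0)},\sqrt{2F(\w_0)}]$. There $\ln^2(S_{t-1})\le 2F(\w_0)$, so the maximum in the step-size rule \eqref{app-lrgd} equals $2F(\w_0)$. Hence $\eta_t = S_{t-1}/(4F(\w_0))$, and the recursion $S_t=S_{t-1}+\gamma^2\eta_t$ becomes exactly geometric, $S_t = S_{t-1}\,b$ with $b=1+\gamma^2/(4F(\w_0))$. The plan is to first establish this exact recursion on the relevant index range, and then read off the hitting-time bounds as in the proof of Lemma~\ref{lem:tau1-bound}. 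The only differences are that the growth factor is now exact rather than sandwiched, and that we start from $S_{\tau_1}$.

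\textbf{Step 1 (exact geometric growth).} For every integer $t$ with $\tau_1<t\le\tau_2$, we have $\ln(S_{t-1})\le\sqrt{2F(\w_0)}$ by minimality of $\tau_2$ (since $t-1<\tau_2$). We also have $\ln(S_{t-1})\ge \ln(S_{\tau_1})>-\sqrt{2F(\w_0)}$, using monotonicity of $(S_t)$ (each factor is $>1$) together with \eqref{eq:overshoot}. So $\ln^2(S_{t-1})\le 2F(\w_0)$ and $S_t=bS_{t-1}$. Iterating gives $S_t=S_{\tau_1}b^{t-\tau_1}$ for $\tau_1\le t\le\tau_2$.

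The statement's claim ``$S_t=S_0b^t$ for every $0\le t\le\tau_2$'' is literally false before $\tau_1$, where Lemma~\ref{lem:tau1-bound} only gives a sandwich. I would therefore prove and use the version anchored at $\tau_1$, noting that it reduces to $S_0b^t$ when $\tau_1=0$, i.e.\ when $\ln S_0>-\sqrt{2F(\w_0)}$. I would also record that on the boundary $\ln^2=2F$ both branches of the max agree, so there is no ambiguity at the endpoints.

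\textbf{Step 2 (bounds on $\tau_2$).} For the lower bound, if $t\ge\tau_1$ and $S_{\tau_1}b^{t-\tau_1}\le e^{\sqrt{2F(\w_0)}}$, then $\ln S_t\le\sqrt{2F(\w_0)}$ and so $\tau_2>t$. Hence $\tau_2\ge \tau_1+(\sqrt{2F(\w_0)}-\ln S_{\tau_1})/\ln b$. For the upper bound, apply Step 1 at $t=\tau_2-1$: from $\ln S_{\tau_1}+(\tau_2-1-\tau_1)\ln b=\ln S_{\tau_2-1}\le\sqrt{2F(\w_0)}$ we get $\tau_2\le 1+\tau_1+(\sqrt{2F(\w_0)}-\ln S_{\tau_1})/\ln b$. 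To obtain the strict inequality, observe that equality would force $\ln S_{\tau_2-1}=\sqrt{2F(\w_0)}$ exactly. Alternatively, accept the non-strict version, flagging this as the one delicate boundary case. We also need $\tau_2>\tau_1$, which holds because $\ln S_{\tau_1}\le -\sqrt{2F}+\ln b$; if this is not already below $\sqrt{2F}$, the degenerate case $\tau_2=\tau_1$ gives the bounds trivially.

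\textbf{Step 3 (overshoot).} By the definition of $\tau_2$, $\ln S_{\tau_2}>\sqrt{2F(\w_0)}$. By minimality, $\ln S_{\tau_2-1}\le\sqrt{2F(\w_0)}$, and one exact step gives $\ln S_{\tau_2}=\ln S_{\tau_2-1}+\ln b\le\sqrt{2F(\w_0)}+\ln b$.

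The main obstacle is not analytic. It is bookkeeping at the endpoints: the strictness conventions for $\tau_1,\tau_2$, the degenerate cases $\tau_1=0$ and $\tau_2=\tau_1$, and the over-strong ``$S_t=S_0b^t$'' claim, which I would correct to the $\tau_1$-anchored form as described in Step 1.
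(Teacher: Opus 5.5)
Your proposal is correct and is essentially the paper's own argument: show $\max\{2F(\w_0),\ln^2(S_{t-1})\}=2F(\w_0)$ on $\tau_1<t\le\tau_2$, iterate to $S_t=S_{\tau_1}b^{t-\tau_1}$, and read off all the bounds from $\ln(S_{\tau_2-1})\le\sqrt{2F(\w_0)}<\ln(S_{\tau_2})$. Your two corrections match what the paper actually establishes: its proof also yields only the $\tau_1$-anchored formula, not $S_0b^t$, and it gets the upper bound on $\tau_2$ from a non-strict inequality, so the strict ``$<$'' is not justified there either (it fails exactly when $\ln(S_{\tau_2-1})=\sqrt{2F(\w_0)}$).
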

	\begin{proof}
		By definition of $\tau_2$, we have the two boundary conditions
		\begin{equation}
			\ln{(S_{\tau_2-1})} \leq \sqrt{2F(\w_0)} \mathrm{\ and \ }\ln(S_{\tau_2}) > \sqrt{2F(\w_0)}.
			\label{eqn:boundary}
		\end{equation}
		For  $ \tau_1 <t\le \tau_2$, we are in the phase where $\max\left\{2F(\w_0), \ \ln^2(S_{t-1})\right\}=2F(\w_0)$. Hence,
		\begin{equation}
			\eta_t = \frac{S_{t-1}}{4F(\w_0)} \implies S_t = S_{t-1} + \gamma^2 \eta_t = S_{t-1}\left(1+\frac{\gamma^2}{4F(\w_0)}\right).
		\end{equation}
		Iterating the recurrence gives, for every $\tau_1 \leq t\le \tau_2$, 
		\begin{equation}
			S_t = S_{\tau_1}b^{t-\tau_1},
			\label{eqn:S_simp}
		\end{equation}
		
		\noindent
		Combining \eqref{eqn:boundary} with \eqref{eqn:S_simp} at $t=\tau_2:$
		\[
		\ln(S_{\tau_2})=\ln(S_{\tau_1} b^{\tau_2-\tau_1})=\ln (S_{\tau_1}) + (\tau_2-\tau_1)\ln(b) \;>\sqrt{2F(\w_0)}.
		\]
		\[
		\ln(S_{\tau_2-1})=\ln(S_{\tau_1} b^{\tau_2-\tau_1-1})=\ln (S_{\tau_1}) + (\tau_2-\tau_1-1)\ln(b) \;\le\; \sqrt{2F(\w_0)}.
		\]
		
		\begin{equation}\label{eq:c-upper}
			\tau_2 \;\ge\; \tau_1 +\frac{\sqrt{2F(\w_0)}-\ln(S_{\tau_1})}{\ln(b)}.
		\end{equation}
		\begin{equation}\label{eq:c-lower}
			\tau_2 \;<\; \tau_1+1+\frac{\sqrt{2F(\w_0)}-\ln(S_{\tau_1})}{\ln(b)}.
		\end{equation}
		\noindent
		Next,
		\[
		\ln(S_{\tau_2})=\ln(S_{\tau_2-1})+\ln(b)
		\leq \sqrt{2F(\w_0)}+\ln(b),
		\]
		where we used \eqref{eqn:boundary}. Together with $\ln(S_{\tau_2})\ge \sqrt{2F(\w_0)}$
		from \eqref{eqn:boundary}, we obtain
		\begin{equation}\label{eq:lnSc-bracket}
			\sqrt{2F(\w_0)}
			\;<\;
			\ln(S_{\tau_2})
			\;\leq\;
			\sqrt{2F(\w_0)}+\ln(b).
		\end{equation}
		
	\end{proof}
	\medskip
	\GD*
	\begin{proof}
		We prove the first part by strong induction on $t$.
		\newline
		\textit{Base case $(t=0)$.} With the choice of $\eta_0$ and $\calL(\w_0)\le \ln(1+\exp(\|\w_0\|))$, we get $\calL(\w_0)\eta_0\le 1$.
		\newline
		\textit{Induction step.} Fix any $t\ge 1$ and assume that for all $k\in \{1, \ldots, t-1\},$
		$$\calL(\w_k)\le \frac{1}{\eta_k}.$$
		
		
		Then the premise of Lemma~\eqref{wu24sp} holds with $s=0$ and with $F(\w)\coloneq \frac{1}{n}\sum_{i=1}^n\exp(-y_i\x_i^\top \w)$, it gives the following bound,
		\begin{align}
			\calL(\w_t) \le \frac{2F(\w_0) + \ln^2(S_{t-1}))}{S_{t-1}}\le \frac{2\max\left\{2F(\w_0), \ \ln^2(S_{t-1})\right\}}{S_{t-1}} \stackrel{(1)}{\le} \frac{1}{\eta_t},
		\end{align}
		where $(1)$ follows from our step-size scheme. This completes the proof for the first part using induction. With this property, \eqref{lemma:monot} gives that the loss values are monotonically decreasing.
		\newline
		\newline
		\newline
		Now we prove the second claim. \\ 
		\newline
		As $\w_0=\mathbf{0}  \implies \ln(S_{\tau}) \geq \sqrt{2F(\w_0)} \geq 1$ (From~\eqref{lem:tau_bound}). \\
		For $t\ge 1$, the recurrence relation gives the following.
		\begin{align}
			S_t & = S_{t-1} + \gamma^2\eta_t \nonumber \\
			& = S_{t-1}\left(1+ \frac{\gamma^2}{2\max\left\{2F(\w_0), \ \ln^2(S_{t-1})\right\}}\right), \label{eq:S-rec}
		\end{align}
		where the last equality uses the proposed step-size \eqref{app-lrgd}.
		\newline 
		
		In particular, $(S_t)_{t\ge 0}$ is increasing and $S_t\ge S_0,\ \forall t\ge 0$.\\
		Recall definition of $\tau_1,\tau_2,a,b.$ $$\tau_1\coloneq \inf\{t\ge 0: \ln(S_{t}) >- \sqrt{2F(\w_0)}\}.$$ 
		$$\tau_2\coloneq \inf\{t\ge 0: \ln(S_{t}) > \sqrt{2F(\w_0)}\}.$$
		\[
		a \coloneqq 1+\frac{\gamma^2}{2\ln^2(S_0)},
		\qquad
		b \coloneqq 1+\frac{\gamma^2}{4F(w_0)}.
		\]
		\newline
		\textbf{Case 1 : $t \leq \tau_1$}\\
		For all $t\le \tau_1,$ from \eqref{eq:global-sandwich}, we have
		
		\[
		\mathcal{L}(\w_t)\le \frac{2\ln^2(S_{t-1})}{S_{t-1}} \leq \frac{2\ln^2(S_0)}{ S_0a^{t-1}}. 
		\]
		Hence, the claim is true for $t \leq \tau$.\\
		\newline 
		\noindent
		\textbf{Case 2: $\tau_1 <t \leq \tau_2$}\\
		For $\tau_1 <t \leq \tau_2$ ,$\max\left\{2F(\w_0),\ \ln^2(S_{t-1})\right\}=2F(\w_0)$. Therefore, \eqref{eq:S-rec} simplifies as
		\begin{align}
			S_t = S_{t-1} + \gamma^2\eta_t = S_{t-1}\left(1+\frac{\gamma^2}{4F(\w_0)}\right) = S_{\tau_1}\left(1+\frac{\gamma^2}{4F(\w_0)}\right)^{t-\tau_1}.
		\end{align}
		\[
		\mathcal{L}(\w_t)\le \frac{4F(\w_0)}{S_{t-1}}=\frac{4F(\w_0)}{ S_{\tau_1}b^{t-1-\tau_2}} < \frac{4b^{\tau_2+1}}{e^{-\sqrt{2}}b^{t}}. 
		\]
		Hence, the claim is true for $ \tau_1 < t \leq \tau_2$.\\
		\newline
		\textbf{Case 3: $ t> \tau_2$} \\
		\noindent Since $\w_0=\mathbf{0} $, therefore, $\ln(S_{\tau_2}) > \sqrt{2F(\w_0)} \geq 1$.
		Hence for  $t> \tau_2$ the premise of Lemma~\eqref{lem:S-lower} and Lemma~\eqref{lem:S-upper-bound} is satisfied for $s=\tau_2$.\\
		\noindent
		For $t>\tau_2$, we have
		\[
		\calL(\w_t)\le \frac{2\ln^2(S_{t-1})}{S_{t-1}}. 
		\]
		The upper bound on the loss is given as follows.
		\begin{equation}\label{eq:L-upper-u}
			\calL(\w_t)
			\;\le\;
			2\,\frac{\ln^2(S_{t-1})}{S_{t-1}}
			\;\stackrel{\text{Lemma~\eqref{lem:S-lower}, \eqref{lem:S-upper-bound}}}{\le}\;
			2\,\frac{(\ln^3(S_{\tau_2})+ C_2(\tau_2)\,(t-\tau_2-1))^\frac{2}{3}}
			{\exp\!\left(\ln^3(S_{\tau_2}) + \frac{3\gamma^2}{2C_1(\tau_2)}(t-\tau_2-1)\right)^\frac{1}{3}} .
		\end{equation}
		\newline
		From Lemma~\eqref{lem:tau_bound}, we know that $\tau_2$ depends only on initialization and $\gamma$. Hence, we can replace $\tau_2$ by constants that depend on initialization and $\gamma$ .\\
		\newline
		We have $F(\w_0) =1 $, $\ln(S_0)<0$ and $b = \ln(1+\frac{\gamma^2}{4})$, We get the following bounds on $\tau,S_{\tau},C_1(\tau),C_2(\tau)$.\\
		\newline
		Using $ \frac{x}{x+1} \leq \ln(1+x) \leq x,\forall x\geq0$, we get\\
		\begin{equation} \label{eq:b_bound}
			\frac{\gamma^2}{4+\gamma^2}
			\;\le\;
			\ln\!\left(1+\frac{\gamma^2}{4}\right)=\ln(b)
			\;\le\;
			\frac{\gamma^2}{4}.
		\end{equation}
		\begin{equation} \label{eq:a_bound}
			\frac{\gamma^2}{2\ln^2(S_0)+\gamma^2}
			\;\le\;
			\ln\!\left(1+\frac{\gamma^2}{2\ln^2(S_0)}\right)=\ln(a)
			\;\le\;
			\frac{\gamma^2}{2\ln^2(S_0)}.
		\end{equation}
		\begin{equation}  \label{eq:tau_simplified_bound}
			\sqrt{2} = \sqrt{2F(\w_0)} < \ln(S_{\tau_2}) \leq  \sqrt{2F(\w_0)} + \ln\left(1+\frac{\gamma^2}{4}\right) \leq\sqrt{2}+ \ln\left(\frac{5}{4}\right) < 1.65.
		\end{equation}
		Since $\ln(S_0),\ \ln(S_{\tau_2}) < 0 $, by direct substitution of \eqref{eq:a_bound} and \eqref{eq:b_bound} in the upper and lower bounds of  $\tau_1,\tau_2$ obtained from~\eqref{lem:tau_bound} and ~\eqref{lem:tau1-bound}, we obtain,
		\begin{equation}\label{eq:tau_final_bound}
			1 + \frac{4}{\gamma^2} \left( \sqrt{2} - \ln (S_0) \right) \le \tau_2 < 2 + \frac{\gamma^2 + 2\ln^2 (S_0)}{\gamma^2} \left( \sqrt{2} - \ln (S_0) \right)
		\end{equation}\\
		Now using bounds on  $S_{\tau_2}$, 
		\begin{equation}    \label{eq: C_1tau}
			1 +   \frac{\gamma^2}{2(1.65)^2} \leq C_1(\tau_2) = 1+ \frac{\gamma^2}{2\ln^2(S_{\tau_2})} \leq 1 + \frac{\gamma^2}{4}.
		\end{equation}
		\begin{equation}  \label{eq: C_2tau}
			\frac{3\gamma^2}{2}\leq C_2(\tau_2) \leq \frac{9\gamma^2}{2}.
		\end{equation}
		\newline
		With  the above inequalites, we upper bound and lower bound $\ln(S_t)$.\\
		\textbf{Lower bound:} \\
		From Lemma~\eqref{lem:S-lower}
		\begin{align}
			\ln^{3}(S_t)
			&> \ln^{3}(S_{\tau_2}) + \frac{3\gamma^2}{2C_1(\tau_2)}(t-\tau_2) \\
			&= \ln^{3}(S_\tau)
			+ \frac{3\gamma^2}{2C_1(\tau_2)}t
			- \frac{3\gamma^2}{2C_1(\tau_2)}\tau_2.
		\end{align}
		Since $\ln(S_{\tau_2}) > \sqrt{2}$, it implies 
		\begin{equation}
			\ln^3(S_{\tau_2}) > 2\sqrt{2}.
		\end{equation}
		Moreover, using $1<C_1(\tau_2)<2$ gives,
		\[
		\frac{3\gamma^2}{2C_1(\tau_2)}\,t \;\ge\; \frac{3\gamma^2}{4}\,t,
		\qquad
		-\frac{3\gamma^2}{2C_1(\tau_2)}\,\tau_2 \;\ge\; -\frac{3\gamma^2}{2}\,\tau_2.
		\]
		Hence,
		\begin{equation}
			\ln^{3}(S_t) \;\ge\; 2\sqrt{2} + \frac{3\gamma^2}{4}\,t - \frac{3\gamma^2}{2}\,\tau_2.
		\end{equation}
		\noindent
		Now substitute the upper bound on $\tau$ from \eqref{eq:tau_final_bound},
		\[
		\tau_2 <2 + \frac{\gamma^2 + 2\ln^2 (S_0)}{\gamma^2} \left( \sqrt{2} - \ln (S_0) \right),
		\]
		to eliminate $\tau$:
		\begin{align}
			\ln^{3}(S_t)
			&\ge 2\sqrt{2} + \frac{3\gamma^2}{4}\,t
			-\frac{3\gamma^2}{2}\left(
			2 + \frac{\gamma^2 + 2\ln^2 (S_0)}{\gamma^2} \left( \sqrt{2} - \ln (S_0) \right)
			\right) \nonumber\\
			& \ge \frac{3\gamma^2}{4}\,t
			-3
			+3\ln^3(S_0)+\frac{3\gamma^2 \ln(S_0)}{2}-3\sqrt{2}\ln^2(S_0)
			-\frac{1}{\sqrt{2}}.
			\label{eq: final_lower}
		\end{align}
		\newline
		\textbf{Upper Bound:} \\
		From Lemma~\eqref{lem:S-upper-bound}
		\begin{align}
			\ln^{3}(S_t)
			& \leq\ln^{3}(S_{\tau_2}) + C_2(\tau_2)\,t - C_2(\tau_2)\,\tau_2 \nonumber \tag{From~\eqref{lem:S-upper-bound}}\\
			&\le 5 + \frac{9\gamma^2}{2}\,t - \frac{3\gamma^2}{2}\,\tau \nonumber \tag{From \eqref{eq: C_2tau}}\\
			&\le 5 + \frac{9\gamma^2}{2}\,t
			-\frac{3\gamma^2}{2}\left(
			1 + \frac{4}{\gamma^2} \left( \sqrt{2} - \ln (S_0) \right)
			\right) \nonumber \tag{From \eqref{eq:tau_final_bound}}\\
			&= 5 + \frac{9\gamma^2}{2}\,t
			+ 6\ln(S_0)-\frac{3\gamma^2}{2}-6\sqrt{2}.
		\end{align}
		Since $\ln(S_0)$ is negative we may  drop it  to get the  upper bound:
		
		\begin{equation} \label{eq:final_upper}
			\ln^3(S_t) \;\le\; \frac{9\gamma^2}{2}\,t.
		\end{equation}
		Therefore, \eqref{eq:L-upper-u},~\eqref{eq:final_upper} and~\eqref{eq: final_lower} yields,
		\[
		\calL(\w_t)
		\le
		2\,\frac{\ln^2(S_{t-1})}{\ln(S_{t-1})}
		\le
		\frac{C\,t^{2/3}}{\exp(c\,t^{1/3})},
		\]
		
		with,
		\[
		c \;=\; \left( \frac{3\gamma^2}{4} \right)^{\frac{1}{3}},
		\qquad
		C \;=\;   \frac{\left(\frac{9\gamma^2}{2} \right )^\frac{2}{3}}{exp(-3
			+3\ln^3(S_0)+\frac{3\gamma^2 \ln(S_0)}{2}-3\sqrt{2}\ln^2(S_0)
			-\frac{1}{\sqrt{2}})^{\frac{1}{3}}} .
		\]
		This  completes the proof.
	\end{proof}
	\subsection{Analysis with Stochastic Gradient Descent}\label{app-SGD}
	We recall the notations from Sec \eqref{sec:sgd}. The SGD update rule is given as 
	\begin{align}
		\w_0&=\mathbf{0} \nonumber \\
		\w_{t+1}&=\w_t - \eta_t \nabla\calL_{i_t}(\w_t),\ t\ge 0,
		\label{eq-app:sgd-update}
	\end{align}
	where $\nabla\calL_{i_t}(\w_t) \coloneq \nabla \ln\left(1+\exp{(-y_{i_t}\x_{i_t}^\top \w_t)}\right)$ for index $i_t\in[1, n]$ chosen uniformly at random. We propose an adaptive step-size scheme defined as 
	\begin{align}
		\eta_t \coloneq \min\left\{\frac{1}{\varepsilon},\ \frac{1}{\calL_{i_t}(\w_t)}\right\},\ t\ge 0,
	\end{align}
	with $\varepsilon$ as the optimality gap.
	\newline
	We first state and prove a key lemma formalizes a key probabilistic property used in the drift analysis of the SGD trajectory.
	Even when conditioning on the event that the algorithm has not yet reached the
	target accuracy, the sampling distribution of the stochastic index remains
	uniform. This holds because the pre-hitting event is measurable with respect
	to the past filtration and therefore does not bias the independent sampling
	process.
	\begin{lemma}\label{lem:uniform-stop}
		For every $t\ge 0$ and every $j\in\{1,\dots,n\}$,
		\[
		\mP\bigl(i_t=j \mid \cF_t,\ t<\tau\bigr)=\frac{1}{n}
		\qquad\text{almost surely on }\{t<\tau\}.
		\]
	\end{lemma}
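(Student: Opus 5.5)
The plan is to use that $i_t$ is independent of $\cF_t=\sigma(i_0,\dots,i_{t-1})$, together with the fact that $\{t<\tau\}$ lies in $\cF_t$. I take the standard sampling model: the indices $i_0,i_1,\dots$ are i.i.d. uniform on $\{1,\dots,n\}$. Then $i_t$ is independent of $(i_0,\dots,i_{t-1})$, and hence of $\cF_t$. By independence,
\[
\mP(i_t=j\mid\cF_t)=\mP(i_t=j)=\frac1n \quad\text{a.s.}
\]

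Next I check that $\{t<\tau\}\in\cF_t$. We have $\{t<\tau\}=\{\tau\le t\}^c$, and the text before the lemma shows $\{\tau\le t\}\in\cF_t$. One point deserves care: $\{\tau\le t\}$ equals $\bigcup_{s\le t}\{\mathcal L(\w_s)\le\varepsilon\}$ rather than only $\{\mathcal L(\w_t)\le\varepsilon\}$. Each $\w_s$ with $s\le t$ is $\cF_s\subseteq\cF_t$-measurable, so the union lies in $\cF_t$ anyway.

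Now I fix the meaning of the conditioning. I read $\mP(\cdot\mid\cF_t,\,t<\tau)$ as conditioning on the $\sigma$-algebra $\cF_t$, restricted to the event $\{t<\tau\}$, which is already in $\cF_t$. Formally, the claim is that for every $A\in\cF_t$,
\[
\E\bigl[\1\{i_t=j\}\1_{A\cap\{t<\tau\}}\bigr]=\tfrac1n\,\mP\bigl(A\cap\{t<\tau\}\bigr).
\]
This holds because $A\cap\{t<\tau\}\in\cF_t$ and $i_t$ is independent of $\cF_t$, so the expectation factorizes. An equivalent route is the pull-out property:
\[
\1\{t<\tau\}\,\mP(i_t=j\mid\cF_t)=\mP\bigl(i_t=j,\ t<\tau\mid\cF_t\bigr).
\]
On $\{t<\tau\}$ the left side equals $1/n$. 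If one instead reads the conditioning as on the trace $\sigma$-algebra $\cF_t\cap\{t<\tau\}$, the same computation applies with $\mP(\cdot\mid t<\tau)$ whenever $\mP(t<\tau)>0$.

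I do not expect a real obstacle. The only delicate points are the measurability of $\{t<\tau\}$ with respect to $\cF_t$ (using the union over $s\le t$) and stating precisely what conditioning on an $\cF_t$-event means. The essential observation is that $\tau$ is a stopping time, so the event $\{t<\tau\}$ carries no information about the future index $i_t$.
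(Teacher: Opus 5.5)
Your proposal is correct and matches the paper's proof: both use $\{t<\tau\}\in\cF_t$ and the independence of $i_t$ from $\cF_t$ to factorize $\mP(\{i_t=j\}\cap A\cap\{t<\tau\})=\frac1n\,\mP(A\cap\{t<\tau\})$ for every $A\in\cF_t$, which is the defining property of the conditional probability. Your remark that $\{\tau\le t\}=\bigcup_{s\le t}\{\mathcal{L}(\w_s)\le\varepsilon\}$ is a worthwhile correction, since the paper's text asserts $\{\tau\le t\}=\{\mathcal{L}(\w_t)\le\varepsilon\}$, which would require the SGD loss to be monotone; the measurability conclusion holds either way.
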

	
	\begin{proof}
		First note $\{t<\tau\}\in\cF_t$ because $\{\tau\le t\}\in\cF_t$ and $\{t<\tau\}=\{\tau\le t\}^c$.
		Fix any $A\in\cF_t$. Since $i_t$ is independent of $\cF_t$  and uniform on $\{1,\dots,n\}$,
		\[
		\mP\bigl(\{i_t=j\}\cap A\cap \{t<\tau\}\bigr)
		=
		\mP(i_t=j)\,\mP\bigl(A\cap\{t<\tau\}\bigr)
		=
		\frac{1}{n}\,\mP\bigl(A\cap\{t<\tau\}\bigr).
		\]
		This is exactly the defining characterization of the conditional probability
		$\mP(i_t=j\mid \cF_t,\ \{t<\tau\})=1/n$ on $\{t<\tau\}$.
	\end{proof}
	\medskip
	\noindent
	The next lemma guarantees that whenever the stopping condition has not yet
	been reached, there must exist at least one datapoint whose loss remains
	large. Since the sampling rule selects indices uniformly, this implies that
	with probability at least $1/n$ the algorithm samples a large-loss example
	at the next iteration. This observation provides the key lower bound on
	progress used in the drift analysis.
	
	\begin{lemma}\label{lem:oneovern}
		On the event $\{t<\tau\}$ there exists a $\cF_t$-measurable index $j^\star=j^\star(\omega)\in\{1,\dots,n\}$ such that
		$\mathcal{L}_{j^\star}(\w_t)\ge \varepsilon$, and therefore
		\[
		\mP\bigl(\mathcal{L}_{i_t}(\w_t)\ge \varepsilon\mid \cF_t,\ \{t<\tau\}\bigr)\ge \frac{1}{n}
		\qquad\text{a.s. on }\{t<\tau\}.
		\]
	\end{lemma}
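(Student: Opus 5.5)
The plan has two parts: an averaging argument that produces a large-loss index, followed by an application of Lemma~\ref{lem:uniform-stop}.

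\textbf{Existence of $j^\star$.} On $\{t<\tau\}$, the definition of $\tau$ in \eqref{eq:tau-def} gives $\mathcal{L}(\w_t)>\varepsilon$. Since $\mathcal{L}(\w_t)=\frac1n\sum_{i=1}^n\mathcal{L}_i(\w_t)$ is an average of nonnegative terms, the largest term is at least the average. Hence $\max_i \mathcal{L}_i(\w_t)\ge \mathcal{L}(\w_t)>\varepsilon$.

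To make the index $\cF_t$-measurable, define $j^\star(\omega)$ as the smallest $j$ attaining $\max_i\mathcal{L}_i(\w_t)$, a deterministic tie-breaking rule. Each $\mathcal{L}_i(\w_t)$ is a continuous function of the $\cF_t$-measurable vector $\w_t$. Therefore, for each $j$, the event $\{j^\star=j\}$ is a finite Boolean combination of events of the form $\{\mathcal{L}_j(\w_t)\ge\mathcal{L}_i(\w_t)\}$ and $\{\mathcal{L}_j(\w_t)>\mathcal{L}_i(\w_t)\}$, all of which lie in $\cF_t$. So $j^\star$ is $\cF_t$-measurable. Off $\{t<\tau\}$ we may set $j^\star=1$, which preserves measurability because $\{t<\tau\}\in\cF_t$.

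\textbf{Probability bound.} On $\{t<\tau\}$ we have the inclusion $\{i_t=j^\star\}\subseteq\{\mathcal{L}_{i_t}(\w_t)\ge\varepsilon\}$. It then suffices to show $\mP(i_t=j^\star\mid\cF_t)=1/n$ on $\{t<\tau\}$. Decompose
\[
\mathbf 1\{i_t=j^\star\}=\sum_{j=1}^n \mathbf 1\{j^\star=j\}\,\mathbf 1\{i_t=j\}.
\]
Each factor $\mathbf 1\{j^\star=j\}$ is $\cF_t$-measurable and can be pulled out of the conditional expectation. Lemma~\ref{lem:uniform-stop} then gives $\mP(i_t=j\mid\cF_t)=1/n$ on $\{t<\tau\}$, and summing over $j$ yields $\sum_j \mathbf 1\{j^\star=j\}\cdot\frac1n=\frac1n$.

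If full rigor in the conditioning-on-an-event notation is wanted, verify the defining identity directly. For any $A\in\cF_t$,
\[
\mP\bigl(\{i_t=j^\star\}\cap A\cap\{t<\tau\}\bigr)=\sum_j \mP\bigl(\{i_t=j\}\cap A\cap\{j^\star=j\}\cap\{t<\tau\}\bigr).
\]
Independence of $i_t$ from $\cF_t$ turns the right-hand side into $\frac1n\mP(A\cap\{t<\tau\})$.

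\textbf{Main obstacle.} The only delicate step is measurability: $j^\star$ must be a function of the past only, so that independence of $i_t$ from $\cF_t$ can be used. The tie-breaking construction in the first part handles this. Everything else is the averaging inequality.
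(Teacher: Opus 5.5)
Your proof is correct and follows the paper's argument: the averaging step on $\{t<\tau\}$, an $\cF_t$-measurable choice of $j^\star$ (you take the smallest maximizer of $\mathcal{L}_i(\w_t)$, the paper takes the smallest index with $\mathcal{L}_i(\w_t)\ge\varepsilon$), the inclusion $\{i_t=j^\star\}\subseteq\{\mathcal{L}_{i_t}(\w_t)\ge\varepsilon\}$, and then Lemma~\ref{lem:uniform-stop}. Your decomposition $\mathbf{1}\{i_t=j^\star\}=\sum_j \mathbf{1}\{j^\star=j\}\,\mathbf{1}\{i_t=j\}$ spells out a step the paper leaves implicit, namely applying that lemma, which is stated for a fixed $j$, to a random $\cF_t$-measurable index.
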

	
	\begin{proof}
		On $\{t<\tau\}$, by definition of $\tau$ we have $\mathcal{L}(\w_t)>\varepsilon$, i.e.
		$\frac1n\sum_{i=1}^n \mathcal{L}_i(\w_t)>\varepsilon$. Since each $\mathcal{L}_i(\w_t)\ge 0$, at least one index satisfies $\mathcal{L}_i(\w_t)\ge \varepsilon$.
		Define
		\[
		j^\star \coloneqq \min\{\,i\in\{1,\dots,n\}: \mathcal{L}_i(\w_t)\ge \varepsilon\,\}
		\quad\text{on }\{t<\tau\},
		\qquad
		j^\star\coloneqq 1 \text{ on }\{t\ge\tau\}.
		\]
		Because $\w_t$ is $\cF_t$-measurable and each $\mathcal{L}_i$ is deterministic, each event $\{\mathcal{L}_i(\w_t)\ge \varepsilon\}$ lies in $\cF_t$,
		so $j^\star$ is $\cF_t$ -measurable.
		\newline
		On $\{t<\tau\}$ we have $\{i_t=j^\star\}\subseteq \{\mathcal{L}_{i_t}(\w_t)\ge \varepsilon\}$, hence,
		\[
		\mP\bigl(\mathcal{L}_{i_t}(\w_t)\ge \varepsilon\mid \cF_t,\ \{t<\tau\}\bigr)
		\ge
		\mP\bigl(i_t=j^\star\mid \cF_t,\ \{t<\tau\}\bigr).
		\]
		Since $j^\star$ is $\cF_t$-measurable, Lemma~\ref{lem:uniform-stop} gives
		$\mP(i_t=j^\star\mid \cF_t,\ \{t<\tau\})=1/n$ on $\{t<\tau\}$.
	\end{proof}

	We use the following stopping-time telescoping lemma to convert the drift
	inequality into an expected hitting-time bound.
	
	\begin{lemma}\label{lem:telescoping}
		Let $D_t$ be a nonnegative adapted process and $\tau$ a stopping time.
		If for some $a>0$,
		\[
		\E[D_{t+1}\mid \cF_t]\le D_t - a\,\1\{t<\tau\}\qquad\text{for all }t\ge 0,
		\]
		then for every integer $T\ge 1$,
		\[
		\E[D_T]\le \E[D_0]-a\,\E[\tau\wedge T].
		\]
		Consequently, $\E[\tau]\le \E[D_0]/a$.
	\end{lemma}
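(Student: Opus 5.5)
The plan is to take total expectations in the drift inequality and telescope over $t=0,\dots,T-1$. For each $t$, the tower property together with the hypothesis gives
\[
\E[D_{t+1}]=\E\bigl[\E[D_{t+1}\mid\cF_t]\bigr]\le \E[D_t]-a\,\mP(t<\tau).
\]
This step needs $\E[D_t]<\infty$. I would handle that by induction from $\E[D_0]<\infty$: each $\E[D_{t+1}]$ is bounded by $\E[D_t]$, so finiteness propagates. If $\E[D_0]=\infty$ the consequence is vacuous, so that case can be set aside.

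Summing the inequalities from $t=0$ to $T-1$ gives
\[
\E[D_T]\le \E[D_0]-a\sum_{t=0}^{T-1}\mP(\tau>t).
\]
The sum should then be identified with $\E[\tau\wedge T]$. This follows from the layer-cake identity for nonnegative integer variables:
\[
\tau\wedge T=\sum_{t=0}^{T-1}\1\{\tau>t\}
\]
holds pointwise, including on $\{\tau=\infty\}$, where both sides equal $T$. Taking expectations yields the first claim.

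For the consequence, I would use $D_T\ge 0$ to get $a\,\E[\tau\wedge T]\le \E[D_0]$ for every $T$. Letting $T\to\infty$, monotone convergence gives $\E[\tau]\le \E[D_0]/a$. In particular $\tau<\infty$ almost surely.

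I do not expect a real obstacle. The only care needed is the integrability bookkeeping, so that the subtraction of expectations is legitimate, and treating $\tau=\infty$ correctly in the layer-cake identity. The measurability of $\{t<\tau\}\in\cF_t$ is not even needed here, since it is already built into the hypothesis.
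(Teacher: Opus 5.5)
Your proof is correct and follows the paper's argument: take total expectations, telescope over $t=0,\dots,T-1$, identify $\sum_{t<T}\1\{t<\tau\}$ with $\tau\wedge T$, use $D_T\ge 0$, and let $T\to\infty$ by monotone convergence. Your integrability bookkeeping, including the case $\E[D_0]=\infty$, and your check of the layer-cake identity on $\{\tau=\infty\}$ are details the paper leaves implicit.
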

	\begin{proof}
		Taking total expectations yields $\E[D_{t+1}]\le \E[D_t]-a\,\E[\1\{t<\tau\}]$. Summing for $t=0,\dots,T-1$ gives
		\[
		\E[D_T]\le \E[D_0]-a\sum_{t=0}^{T-1}\E[\1\{t<\tau\}]
		=
		\E[D_0]-a\,\E\!\left[\sum_{t=0}^{T-1}\1\{t<\tau\}\right].
		\]
		For integer-valued $\tau$, $\sum_{t=0}^{T-1}\1\{t<\tau\}=\min\{\tau,T\}=\tau\wedge T$, proving the first claim.
		Since $D_T\ge 0$, we get $\E[\tau\wedge T]\le \E[D_0]/a$. Letting $T\to\infty$ and using monotone convergence
		($\tau\wedge T\uparrow \tau$) yields $\E[\tau]\le \E[D_0]/a$.
	\end{proof}
	\noindent
	Next, we state and  prove an auxiliary lemma.
	\begin{lemma}\label{lem:drift}
		Fix any comparator $\u\in\R^d$. For every $t\ge 0$,
		\begin{equation}\label{eq:onestep}
			\E_t\bigl[\norm{\w_{t+1}-\u}^2\bigr]
			\le
			\norm{\w_t-\u}^2
			-
			\E_t\!\left[\min\!\Bigl\{\frac{\mathcal{L}_{i_t}(\w_t)}{\varepsilon},1\Bigr\}\right]
			+\frac{2}{\varepsilon}\mathcal{L}(\u).
		\end{equation}
		Moreover, on the event $\{t<\tau\}$, if $\mathcal{L}(\u)\le \varepsilon/(4n)$ then
		\begin{equation}\label{eq:drift-final}
			\E_t\bigl[\norm{\w_{t+1}-\u}^2\bigr]
			\le
			\norm{\w_t-\u}^2-\frac{1}{2n}.
		\end{equation}
	\end{lemma}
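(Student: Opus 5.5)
I would expand the square for the update $\w_{t+1}=\w_t-\eta_t\nabla\mathcal{L}_{i_t}(\w_t)$:
\[
\norm{\w_{t+1}-\u}^2=\norm{\w_t-\u}^2-2\eta_t\ip{\nabla\mathcal{L}_{i_t}(\w_t)}{\w_t-\u}+\eta_t^2\norm{\nabla\mathcal{L}_{i_t}(\w_t)}^2 .
\]
For the cross term I use convexity of the single-sample loss $\mathcal{L}_{i_t}$, which gives $\ip{\nabla\mathcal{L}_{i_t}(\w_t)}{\w_t-\u}\ge \mathcal{L}_{i_t}(\w_t)-\mathcal{L}_{i_t}(\u)$. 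For the quadratic term I use the per-sample self-bounding property $\norm{\nabla\mathcal{L}_{i}(\w)}\le \mathcal{L}_i(\w)$. This holds because $\|\x_i\|\le1$ and $\frac{1}{1+e^{z}}\le \ln(1+e^{-z})$, exactly as in \eqref{self-bounded} but for one summand. Since $\eta_t\le 1/\mathcal{L}_{i_t}(\w_t)$, we get $\eta_t^2\norm{\nabla\mathcal{L}_{i_t}}^2\le \eta_t\mathcal{L}_{i_t}(\w_t)$. Combining the two bounds yields the pathwise inequality
\[
\norm{\w_{t+1}-\u}^2\le\norm{\w_t-\u}^2-\eta_t\mathcal{L}_{i_t}(\w_t)+2\eta_t\mathcal{L}_{i_t}(\u).
\]
Next, I note that $\eta_t\mathcal{L}_{i_t}(\w_t)=\min\{\mathcal{L}_{i_t}(\w_t)/\varepsilon,1\}$ and $\eta_t\le 1/\varepsilon$. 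Hence the last term is at most $\frac{2}{\varepsilon}\mathcal{L}_{i_t}(\u)$. I then take $\E_t$: $\w_t$ and $\u$ are $\cF_t$-measurable, and $i_t$ is uniform and independent of $\cF_t$, so $\E_t[\mathcal{L}_{i_t}(\u)]=\mathcal{L}(\u)$. This proves \eqref{eq:onestep}.

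For \eqref{eq:drift-final}, I work on $\{t<\tau\}\in\cF_t$. The random variable $\min\{\mathcal{L}_{i_t}(\w_t)/\varepsilon,1\}$ is nonnegative, and it equals $1$ whenever $\mathcal{L}_{i_t}(\w_t)\ge\varepsilon$. So its conditional expectation is at least $\mP(\mathcal{L}_{i_t}(\w_t)\ge\varepsilon\mid\cF_t)$. By Lemma~\ref{lem:oneovern}, together with the uniformity from Lemma~\ref{lem:uniform-stop}, this is at least $1/n$ on $\{t<\tau\}$. The assumption $\mathcal{L}(\u)\le\varepsilon/(4n)$ makes the positive term at most $\frac{2}{\varepsilon}\cdot\frac{\varepsilon}{4n}=\frac{1}{2n}$. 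Hence the net drift is $-\frac1n+\frac1{2n}=-\frac1{2n}$.

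The only delicate point is the conditioning on $\{t<\tau\}$. Because this event is $\cF_t$-measurable, the bound holds almost surely on it, and I only need the index $j^\star$ supplied by Lemma~\ref{lem:oneovern}, which is $\cF_t$-measurable. Everything else is routine. The per-sample self-bounding inequality should be stated explicitly, since \eqref{self-bounded} is written for the averaged loss.
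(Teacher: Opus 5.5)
Your proof is correct and follows the paper's argument step for step. You expand the square, use convexity of $\mathcal{L}_{i_t}$ and the per-sample bound $\norm{\nabla\mathcal{L}_{i_t}(\w_t)}\le\mathcal{L}_{i_t}(\w_t)$ with $\eta_t\mathcal{L}_{i_t}(\w_t)\le 1$, take $\E_t$ via uniform sampling, then bound $\min\{\mathcal{L}_{i_t}(\w_t)/\varepsilon,1\}\ge\1\{\mathcal{L}_{i_t}(\w_t)\ge\varepsilon\}$ and invoke Lemma~\ref{lem:oneovern}, and your explicit justification of the per-sample self-bounding inequality via $\frac{1}{1+e^{z}}\le\ln(1+e^{-z})$ supplies a detail the paper only asserts.
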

	
	\begin{proof}
		From \eqref{eq-app:sgd-update},
		\begin{align*}
			\norm{\w_{t+1}-\u}^2
			&=\norm{\w_t-\u-\eta_t\nabla \mathcal{L}_{i_t}(\w_t)}^2\\
			&=\norm{\w_t-\u}^2
			-2\eta_t\ip{\nabla \mathcal{L}_{i_t}(\w_t)}{\w_t-\u}
			+\eta_t^2\norm{\nabla \mathcal{L}_{i_t}(\w_t)}^2.
		\end{align*}
		By convexity of $\mathcal{L}_{i_t}$,
		$\mathcal{L}_{i_t}(\w_t)-\mathcal{L}_{i_t}(\u)\le \langle{\nabla \mathcal{L}_{i_t}(\w_t)},{\w_t-\u} \rangle$, hence
		\[
		\norm{\w_{t+1}-\u}^2
		\le
		\norm{\w_t-\u}^2
		-2\eta_t\bigl(\mathcal{L}_{i_t}(\w_t)-\mathcal{L}_{i_t}(\u)\bigr)
		+\eta_t^2\norm{\nabla \mathcal{L}_{i_t}(\w_t)}^2.
		\]
		For Logistic Regression we have,
		$\norm{\nabla \mathcal{L}_{i_t}(\w_t)}^2\le \mathcal{L}_{i_t}(\w_t)^2$, so
		\[
		\norm{\w_{t+1}-\u}^2
		\le
		\norm{\w_t-\u}^2
		-2\eta_t \mathcal{L}_{i_t}(\w_t)
		+2\eta_t \mathcal{L}_{i_t}(\u)
		+\eta_t^2 \mathcal{L}_{i_t}(\w_t)^2.
		\]
		By definition, $\eta_t\le 1/\mathcal{L}_{i_t}(\w_t)$, hence $\eta_t \mathcal{L}_{i_t}(\w_t)\le 1$ and therefore
		\[
		\eta_t^2 \mathcal{L}_{i_t}(\w_t)^2 \;=\;(\eta_t \mathcal{L}_{i_t}(\w_t))^2\;\le\;\eta_t \mathcal{L}_{i_t}(\w_t).
		\]
		Substituting gives the pathwise inequality
		\[
		\norm{\w_{t+1}-\u}^2
		\le
		\norm{\w_t-\u}^2
		-\eta_t \mathcal{L}_{i_t}(\w_t)
		+2\eta_t \mathcal{L}_{i_t}(\u).
		\]
		By \eqref{eq-app:sgd-update},
		\[
		\eta_t \mathcal{L}_{i_t}(\w_t)=\min\!\Bigl\{\frac{\mathcal{L}_{i_t}(\w_t)}{\varepsilon},1\Bigr\},
		\qquad
		2\eta_t \mathcal{L}_{i_t}(u)\le \frac{2}{\varepsilon}\mathcal{L}_{i_t}(\u).
		\]
		Taking $\E_t[\cdot]$ and using $\E_t[\mathcal{L}_{i_t}(\u)]=\mathcal{L}(u)$ (uniform sampling) yields \eqref{eq:onestep}.
		Since $\{t<\tau\}\in\cF_t$, the indicator $\1\{t<\tau\}$ is $\cF_t$-measurable. Using
		$\min\{x,1\}\ge \1\{x\ge 1\}$, we have
		\[
		\min\!\Bigl\{\frac{\mathcal{L}_{i_t}(\w_t)}{\varepsilon},1\Bigr\}\ge \1\{\mathcal{L}_{i_t}(\w_t)\ge \varepsilon\}.
		\]
		Therefore,
		\begin{align*}
			\E_t\!\left[\min\!\Bigl\{\frac{\mathcal{L}_{i_t}(\w_t)}{\varepsilon},1\Bigr\}\right]
			&\ge
			\E_t\!\left[\1\{\mathcal{L}_{i_t}(\w_t)\ge \varepsilon\}\right] \\
			&=
			\E_t\!\left[\1\{t<\tau\}\,\1\{\mathcal{L}_{i_t}(\w_t)\ge \varepsilon\}\right]
			+
			\E_t\!\left[\1\{t\ge\tau\}\,\1\{\mathcal{L}_{i_t}(\w_t)\ge \varepsilon\}\right]\\
			&\ge
			\1\{t<\tau\}\,\P\!\left(\mathcal{L}_{i_t}(\w_t)\ge \varepsilon \mid \cF_t,\ t<\tau\right),
		\end{align*}
		where we used $\1\{t<\tau\}\in\cF_t$ to pull it out of the conditional expectation and dropped a nonnegative term.
		By Lemma~\ref{lem:oneovern},
		\[
		\P\!\left(\mathcal{L}_{i_t}(\w_t)\ge \varepsilon \mid \cF_t,\ t<\tau\right)\ge \frac{1}{n}
		\quad\text{a.s. on }\{t<\tau\},
		\]
		hence,
		\[
		\E_t\!\left[\min\!\Bigl\{\frac{\mathcal{L}_{i_t}(\w_t)}{\varepsilon},1\Bigr\}\right]
		\ge \frac{1}{n}\,\1\{t<\tau\}.
		\]
		Plugging into \eqref{eq:onestep} and using $\mathcal{L}(\u)\le \varepsilon/(4n)$ gives
		\[
		\E_t[\norm{\w_{t+1}-\u}^2]
		\le
		\norm{\w_t-\u}^2
		-\frac{1}{n}\,\1\{t<\tau\}+\frac{2}{\varepsilon}\cdot \frac{\varepsilon}{4n}
		=
		\norm{\w_t-\u}^2-\frac{1}{n}\,\1\{t<\tau\} +\frac{1}{2n}.
		\]
		On $\{t<\tau\}$ this yields \eqref{eq:drift-final}.
		
	\end{proof}
	\SGD*
	\begin{proof}
		Choose
		\[
		c\coloneqq \frac{1}{\gamma}\ln\!\Bigl(\frac{4n}{\varepsilon}\Bigr),
		\qquad
		\u\coloneqq c \w^\star.
		\]
		$\mathcal{L}(\u)\le e^{-\gamma c}=\varepsilon/(4n)$.
		Define $D_t\coloneqq \|{\w_t-\u}\|^2\ge 0$. On $\{t<\tau\}$, Lemma~\ref{lem:drift} gives
		\[
		\mathbb{E}[D_{t+1}\mid \cF_t]\le D_t-\frac{1}{2n}\,\1\{t<\tau\}.
		\]
		Apply Lemma~\ref{lem:telescoping} with $a=1/(2n)$ to obtain
		\[
		\E[\tau]\le \frac{\E[D_0]}{a}=2n\,\norm{\w_0-\u}^2=2n\,\norm{\u}^2.
		\]
		Since $\norm{\w^\star}=1$, $\norm{\u}^2=c^2=\gamma^{-2}\ln^2\!\bigl(\tfrac{4n}{\varepsilon}\bigr)$, so
		\[
		\E[\tau]\le \frac{2n}{\gamma^2}\ln^2\!\Bigl(\frac{4n}{\varepsilon}\Bigr).
		\]
		Finally, Markov's inequality gives $\mP(\tau\ge \E[\tau]/\delta)\le \delta$, hence
		$\mP(\tau<\E[\tau]/\delta)\ge 1-\delta$, and substituting the bound on $\E[\tau]$ finishes the proof.
	\end{proof}
	\subsection{Analysis with Block  Adaptive SGD} \label{app-BSGD}
	\noindent
	We briefly recall the setup of Block Adaptive SGD.
	\\
	Fix $\varepsilon_0\in(0,1)$ and define $\varepsilon_k=\varepsilon_0/2^k$, $s_0=0$, $s_{k+1}=s_k+N_k$ for $N_k>0$.
	During block $k$ (iterations $t\in\{s_k,\dots,s_{k+1}-1\}$), run
	\[
	\w_{t+1}=w_t-\eta_t\nabla \mathcal{L}_{i_t}(\w_t),
	\qquad
	\eta_t=\min\Bigl\{\frac1{\varepsilon_k},\frac1{\mathcal{L}_{i_t}(\w_t)}\Bigr\}.
	\]  where $i_t$ is chosen uniformly at random,
	\\
	For a target $\varepsilon\in(0,\varepsilon_0]$, define the activated block
	\[
	k_\varepsilon=\min\{k:\ \varepsilon_k\le \varepsilon\}.
	\]
	\BSGD*
	\begin{proof}
		Fix the activated block index $k:=k_\varepsilon$ and abbreviate
		\[
		\bar\varepsilon:=\varepsilon_k,\qquad N:=N_k,\qquad s:=s_k,\qquad s^+:=s_{k+1}=s+N.
		\]
		Define the (post-block-start) hitting time to the \emph{block target}
		\[
		\tau:=\inf\{t\ge s:\ \mathcal{L}(\w_t)\le \bar\varepsilon\},
		\]
		and the stopped iterates $\bar w_t:=\w_{t\wedge\tau}$.
		Let $\mathcal F_t=\sigma(i_0,\dots,i_{t-1})$. Then $\w_t$ and $\mathcal{L}(\w_t)$ are $\mathcal F_t$-measurable, and
		\[
		\{\tau\le t\}=\{\mathcal{L}(\w_t)\le \bar\varepsilon\}\in\mathcal F_t,
		\qquad\text{so}\qquad
		\{\tau>t\}\in\mathcal F_t.
		\]
		
		\medskip
		\noindent 
		For comparator $\u\in\mathbb R^d$. Expanding the update,
		\[
		\|\w_{t+1}-\u\|^2
		=
		\|\w_t-\u\|^2
		-2\eta_t\langle\nabla \mathcal{L}_{i_t}(\w_t),\w_t-\u \rangle
		+\eta_t^2\|\nabla \mathcal{L}_{i_t}(\w_t)\|^2.
		\]
		By convexity of $\mathcal{L}_{i_t}$,
		\[
		\langle\nabla \mathcal{L}_{i_t}(\w_t),\w_t-\u\rangle\ge \mathcal{L}_{i_t}(\w_t)-\mathcal{L}_{i_t}(\u),
		\]
		hence
		\[
		\|\w_{t+1}-\u\|^2
		\le
		\|\w_t-\u\|^2
		-2\eta_t(\mathcal{L}_{i_t}(\w_t)-\mathcal{L}_{i_t}(u))
		+\eta_t^2\|\nabla \mathcal{L}_{i_t}(\w_t)\|^2.
		\]
		We recall the property $\|\nabla \calL_{i_t}(\w_t)\|\le\min\{1, \calL_{i_t}(\w_t)\}$ \eqref{self-bounded}.
		Consequently, $\|\nabla \mathcal{L}_{i_t}(\w_t)\|^2\le \mathcal{L}_{i_t}(\w_t)^2$. Since $\eta_t\le 1/\mathcal{L}_{i_t}(\w_t)$ by construction,
		we have $\eta_t \mathcal{L}_{i_t}(\w_t)\le 1$ and thus
		\[
		\eta_t^2 \mathcal{L}_{i_t}(\w_t)^2=(\eta_t \mathcal{L}_{i_t}(\w_t))^2\le \eta_t \mathcal{L}_{i_t}(\w_t).
		\]
		Substituting gives the \emph{pathwise} inequality
		\begin{equation}\label{eq:block:pathwise}
			\|\w_{t+1}-\u\|^2
			\le
			\|\w_t-\u\|^2
			-\eta_t \mathcal{L}_{i_t}(\w_t)
			+2\eta_t \mathcal{L}_{i_t}(\u).
		\end{equation}
		
		\medskip
		\noindent 
		During block $k$, $\eta_t=\min\{1/\bar\varepsilon,\,1/\mathcal{L}_{i_t}(\w_t)\}$, hence
		\[
		\eta_t \mathcal{L}_{i_t}(\w_t)=\min\Bigl\{\frac{\mathcal{L}_{i_t}(\w_t)}{\bar\varepsilon},1\Bigr\},
		\qquad
		2\eta_t \mathcal{L}_{i_t}(u)\le \frac{2}{\bar\varepsilon}\mathcal{L}_{i_t}(u).
		\]
		Taking conditional expectation in \eqref{eq:block:pathwise} and using
		$\mathbb E[\mathcal{L}_{i_t}(\u)\mid\mathcal F_t]=\mathcal{L}(\u)$ yields
		\begin{equation}\label{eq:block:onestep}
			\mathbb E\!\left[\|\w_{t+1}-\u\|^2\mid\mathcal F_t\right]
			\le
			\|\w_t-\u\|^2
			-\mathbb E\!\left[\min\Bigl\{\frac{\mathcal{L}_{i_t}(\w_t)}{\bar\varepsilon},1\Bigr\}\,\middle|\,\mathcal F_t\right]
			+\frac{2}{\bar\varepsilon}\mathcal{L}(\u).
		\end{equation}
		
		\medskip
		\noindent 
		Using \ref{lem:oneovern},
		on $\{\tau>t\}$ we have $\mathcal{L}(\w_t)>\bar\varepsilon$. Since $\mathcal{L}(\w_t)=\frac1n\sum_{j=1}^n \mathcal{L}_j(\w_t)$,
		there exists at least one index $j$ such that $\mathcal{L}_j(\w_t)\ge \bar\varepsilon$.
		Let $j^\star$ be the smallest such index; because each $\mathcal{L}_j(\w_t)$ is $\mathcal F_t$-measurable,
		$j^\star$ is $\mathcal F_t$-measurable.
		Since $i_t$ is independent of $\mathcal F_t$ and uniform on $\{1,\dots,n\}$, and $\{\tau>t\}\in\mathcal F_t$,
		\[
		\mathbb P(i_t=j^\star\mid \mathcal F_t,\ \{\tau>t\})=\frac1n
		\qquad\text{a.s. on }\{\tau>t\}.
		\]
		On $\{\tau>t\}$, we have $\{i_t=j^\star\}\subseteq\{\mathcal{L}_{i_t}(\w_t)\ge \bar\varepsilon\}$, hence
		\[
		\mathbb P\!\bigl(\mathcal{L}_{i_t}(\w_t)\ge \bar\varepsilon\mid \mathcal F_t,\ \{\tau>t\}\bigr)\ge \frac1n
		\qquad\text{a.s. on }\{\tau>t\}.
		\]
		Using $\min\{x,1\}\ge \mathbf 1\{x\ge 1\}$ gives
		\begin{equation}\label{eq:block:minlb}
			\mathbb E\!\left[\min\Bigl\{\frac{\mathcal{L}_{i_t}(\w_t)}{\bar\varepsilon},1\Bigr\}\,\middle|\,\mathcal F_t\right]
			\ge \frac1n\,\mathbf 1\{\tau>t\}.
		\end{equation}
		
		\medskip
		\noindent 
		Define $\bar \w_t=\w_{t\wedge\tau}$. On $\{\tau\le t\}$, $\bar \w_{t+1}=\bar \w_t$; on $\{\tau>t\}$,
		$\bar \w_t=\w_t$ and $\bar \w_{t+1}=\w_{t+1}$. Combining \eqref{eq:block:onestep} and \eqref{eq:block:minlb} yields,
		for all $t\in\{s,\dots,s^+-1\}$,
		\begin{equation}\label{eq:block:drift}
			\mathbb E\!\left[\|\bar \w_{t+1}-\u\|^2\mid\mathcal F_t\right]
			\le
			\|\bar \w_t-\u\|^2
			-\frac1n\,\mathbf 1\{\tau>t\}
			+\frac{2}{\bar\varepsilon}\mathcal{L}(\u)\,.
		\end{equation}
		
		\medskip
		\noindent 
		Set
		\[
		\u:=\frac1\gamma\ln\!\Bigl(\frac{8n}{\delta\bar\varepsilon}\Bigr)\,\w^\star.
		\]
		We restate the property \eqref{f-scaled-w}. By the margin assumption, for every $i$,
		\[
		y_i\langle \x_i,\u\rangle
		=
		\frac{1}{\gamma}\ln\!\Bigl(\frac{8n}{\delta\bar\varepsilon}\Bigr)\,y_i\langle \x_i,\w^\star\rangle
		\ge
		\ln\!\Bigl(\frac{8n}{\delta\bar\varepsilon}\Bigr).
		\]
		Therefore
		\[
		\mathcal{L}_i(\u)=\ln\!\bigl(1+\exp(-y_i\langle \x_i,\u\rangle)\bigr)
		\le \exp(-y_i\langle \x_i,\u\rangle)
		\le \frac{\delta\bar\varepsilon}{8n},
		\qquad\Rightarrow\qquad
		\mathcal{L}(\u)\le \frac{\delta\bar\varepsilon}{8n}.
		\]
		Plugging into \eqref{eq:block:drift} gives the clean drift
		\begin{equation}\label{eq:block:clean-drift}
			\mathbb E\!\left[\|\bar \w_{t+1}-\u\|^2\mid\mathcal F_t\right]
			\le
			\|\bar \w_t-\u\|^2
			-\frac{1}{2n}\,\mathbf 1\{\tau>t\},
			\qquad t=s,\dots,s^+-1.
		\end{equation}
		
		\medskip
		\noindent 
		Sum \eqref{eq:block:clean-drift} over $t=s,\dots,s^+-1$ and take conditional expectation given $\mathcal F_s$.
		Because $\bar \w_s=\w_s$ (since $\tau\ge s$ by definition) and $\|\bar \w_{s^+}-\u\|^2\ge 0$, we get almost surely
		\[
		0\le
		\mathbb E\!\left[\|\bar \w_{s^+}-\u\|^2\mid\mathcal F_s\right]
		\le
		\|\w_s-\u\|^2
		-\frac{1}{2n}\,
		\mathbb E\!\left[\sum_{t=s}^{s^+-1}\mathbf 1\{\tau>t\}\ \middle|\ \mathcal F_s\right].
		\]
		Rearranging yields the conditional bound
		\begin{equation}\label{eq:block:sumind-cond}
			\mathbb E\!\left[\sum_{t=s}^{s^+-1}\mathbf 1\{\tau>t\}\ \middle|\ \mathcal F_s\right]
			\le 2n\,\|\w_s-\u\|^2
			\qquad\text{a.s.}
		\end{equation}
		The sum is exactly the number of steps in the block that occur strictly before $\tau$, but capped at $N$:
		\[
		\sum_{t=s}^{s^+-1}\mathbf 1\{\tau>t\}
		=
		N\wedge(\tau-s)_+.
		\]
		Thus \eqref{eq:block:sumind-cond} gives
		\[
		\mathbb E\!\left[(\tau-s)_+\wedge N\ \middle|\ \mathcal F_s\right]
		\le 2n\,\|\w_s-\u\|^2
		\qquad\text{a.s.}
		\]
		This is exactly Theorem~\ref{thm:block-anytime}(ii).
		
		\medskip
		\noindent
		Taking full expectation in \eqref{eq:block:sumind-cond} gives
		\[
		\mathbb E\!\left[\sum_{t=s}^{s^+-1}\mathbf 1\{\tau>t\}\right]
		\le 2n\,\mathbb E\|\w_s-\u\|^2.
		\]
		Also,
		\[
		\mathbf 1\{\tau>s^+\}\le \frac{1}{N}\sum_{t=s}^{s^+-1}\mathbf 1\{\tau>t\},
		\]
		because on $\{\tau>s^+\}$ all $N$ indicators are $1$, and otherwise the left-hand side is $0$.
		Therefore
		\begin{equation}\label{eq:block:tail}
			\mathbb P(\tau>s^+)
			\le
			\frac{1}{N}\,\mathbb E\!\left[\sum_{t=s}^{s^+-1}\mathbf 1\{\tau>t\}\right]
			\le
			\frac{2n}{N}\,\mathbb E\|\w_s-\u\|^2.
		\end{equation}
		
		\medskip
		\noindent
		We upper bound the growth of $\|\w_t-\u\|^2$ across blocks $0,1,\dots,k-1$ using only the
		\emph{nonexpansive} part of \eqref{eq:block:pathwise}.
		For any block $j\le k-1$ and any $t\in\{s_j,\dots,s_{j+1}-1\}$, we have $\eta_t\le 1/\varepsilon_j$, hence
		from \eqref{eq:block:pathwise},
		\[
		\|\w_{t+1}-\u\|^2
		\le
		\|\w_t-\u\|^2 + 2\eta_t \mathcal{L}_{i_t}(\u)
		\le
		\|\w_t-\u\|^2 + \frac{2}{\varepsilon_j} \mathcal{L}_{i_t}(\u).
		\]
		Taking conditional expectation and using uniform sampling gives
		\[
		\mathbb E\!\left[\|\w_{t+1}-\u\|^2\mid \mathcal F_t\right]
		\le
		\|\w_t-\u\|^2 + \frac{2}{\varepsilon_j} \mathcal{L}(\u).
		\]
		Iterating this inequality over $t=s_j,\dots,s_{j+1}-1$ yields
		\[
		\mathbb E\|\w_{s_{j+1}}-\u\|^2
		\le
		\mathbb E\|\w_{s_j}-\u\|^2+\frac{2N_j}{\varepsilon_j}\mathcal{L}(\u).
		\]
		Summing over $j=0,\dots,k-1$ and using $\w_{s_0}=\w_0=\mathbf{0}$ (or any fixed initialization; the bound below uses
		$\|\w_0-\u\|^2 =  \|\u\|^2$ for $\w_0=0$) gives
		\begin{equation}\label{eq:block:start}
			\mathbb E\|\w_s-\u\|^2
			\le
			\|\u\|^2 + 2 \mathcal{L}(\u)\sum_{j=0}^{k-1}\frac{N_j}{\varepsilon_j}.
		\end{equation}
		Because $N_j$ is nondecreasing in $j$ and $\varepsilon_j=\varepsilon_0/2^j$,
		\[
		\sum_{j=0}^{k-1}\frac{N_j}{\varepsilon_j}
		\le
		N\sum_{j=0}^{k-1}\frac{1}{\varepsilon_j}
		=
		\frac{N}{\varepsilon_0}\sum_{j=0}^{k-1}2^j
		=
		\frac{N}{\varepsilon_0}(2^k-1)
		<
		\frac{N}{\varepsilon_k}
		=
		\frac{N}{\bar\varepsilon}.
		\]
		Now,  $\u=\gamma^{-1}\ln(8n/\delta\bar\varepsilon)\w^\star$, and using $\mathcal{L}(\u)\le \delta\bar\varepsilon/(8n)$, we conclude
		\begin{equation}\label{eq:block:start2}
			\mathbb E\|\w_s-\u\|^2
			\le
			\|\u\|^2 + 2\cdot \frac{\delta\bar\varepsilon}{8n}\cdot \frac{N}{\bar\varepsilon}
			=
			\|\u\|^2+\frac{\delta N}{4n}.
		\end{equation}
		
		\medskip
		\noindent	Plug \eqref{eq:block:start2} into \eqref{eq:block:tail}:
		\[
		\mathbb P(\tau>s^+)
		\le
		\frac{2n}{N}\|\u\|^2+\frac{\delta}{2}.
		\]
		By definition of $N=N_k$,
		\[
		N\ge \frac{4n}{\delta\gamma^2}\ln^2\!\Bigl(\frac{8n}{\delta\bar\varepsilon}\Bigr)=\frac{4n}{\delta}\|\u\|^2,
		\]
		so $(2n/N)\|\u\|^2\le \frac{\delta}{2}$. Hence
		\[
		\mathbb P(\tau>s^+)\le \frac{\delta}{2}+\frac{\delta}{2}=\delta,
		\qquad\Rightarrow\qquad
		\mathbb P(\tau\le s^+)\ge 1-\delta.
		\]
		If $\tau\le s^+$, then $\min_{t\le s^+} \mathcal{L}(\w_t)\le \bar\varepsilon\le \varepsilon$, so
		\[
		\mathbb P\!\left(\min_{0\le t\le s_{k+1}} \mathcal{L}(\w_t)\le \varepsilon\right)\ge 1-\delta,
		\]
		which proves Theorem~\ref{thm:block-anytime}(i).
		
		\medskip
		\noindent 
		Taking expectation in Theorem~\ref{thm:block-anytime}(ii) and using \eqref{eq:block:start2} gives
		\[
		\mathbb E\!\left[(\tau-s)_+\wedge N\right]
		\le
		2n\,\mathbb E\|\w_s-\u\|^2
		\le
		2n\|\u\|^2+\frac{ \delta N}{2}.
		\]
		This is the stated consequence in Theorem~\ref{thm:block-anytime}(ii). Since $k=k_\varepsilon$ implies
		$\bar\varepsilon=\varepsilon_k\in(\varepsilon/2,\varepsilon]$, this is
		$\mathcal O\big(\frac{n}{\gamma^2}\ln^2(\frac{n}{\delta\varepsilon})\big)$.
		
		\medskip
		\noindent 
		By construction, $k_\varepsilon=\min\{k:\varepsilon_0/2^k\le \varepsilon\}=\Theta(\ln(\varepsilon_0/\varepsilon))$.
		Moreover $N_j$ is nondecreasing in $j$ and
		\[
		N_j=\Theta\!\left(\frac{n}{\delta\gamma^2}\ln^2\!\Bigl(\frac{n}{\delta\varepsilon_j}\Bigr)\right)
		\le
		\Theta\!\left(\frac{n}{\delta\gamma^2}\ln^2\!\Bigl(\frac{n}{\delta\varepsilon}\Bigr)\right)
		\quad\text{for all }j\le k_\varepsilon,
		\]
		so
		\[
		s_{k_\varepsilon+1}=\sum_{j=0}^{k_\varepsilon}N_j
		\le (k_\varepsilon+1)\,N_{k_\varepsilon}
		=
		\mathcal O\!\left(
		\frac{n}{\delta\gamma^2}\,
		\ln\!\Bigl(\frac{\varepsilon_0}{\varepsilon}\Bigr)\,
		\ln^2\!\Bigl(\frac{n}{\delta\varepsilon}\Bigr)
		\right)
		=
		\mathcal O\!\left(
		\frac{n}{\delta\gamma^2}\,\ln^3\!\Bigl(\frac{n}{\delta\varepsilon}\Bigr)
		\right).
		\]
		This completes the proof.
	\end{proof}
	
	\section{Results Adapted from Prior Works}\label{app-adapted}
	
	\WuStable*
	\begin{proof}
		We use the notation $\g_t\coloneq \nabla \calL(\w_t)$. Take an arbitrary comparator $\u$ which is a scaling of $\w^\star$.
		\begin{align}
			\|\w_{t+1}-\u\|^2 &= \|\w_t-\u\|^2 + 2\eta_t \g_t^\top(\u-\w_t) + \eta_t^2\|\g_t\|^2 \nonumber \\
			& \le \|\w_t-\u\|^2 + 2\eta_t\left(\calL(\u)-\calL(\w_t)\right) + \eta_t^2\calL(\w_t)\tag{Using convexity and $\|\nabla \calL(\cdot)\|\le \calL(\cdot)$}\nonumber \\
			& \le \|\w_t-\u\|^2 + 2\eta_t\left(\calL(\u)-\calL(\w_t)\right) + \eta_t L(\w_t) \tag{Under the condition $\calL(\w_t)\le1/\eta_t $} \nonumber \\
			& = \|\w_t-\u\|^2 + 2\eta_t\calL(\u)-\eta_t\calL(\w_t).
		\end{align}
		Telescoping from $s$ to $t$, we get
		\begin{align}
			\|\w_t-\u\|^2 + \sum_{k=s}^{t-1}\eta_k \calL(\w_k)& \le \|\w_s-\u\|^2 + 2\sum_{k=s}^{t-1}\eta_k \calL(\u) \nonumber \\
			\implies \sum_{k=s}^{t-1}\eta_k \calL(\w_k)& \le 2\calL(\u)\sum_{k=s}^{t-1}\eta_k + \|\w_s-\u\|^2 .\label{init-sp}
		\end{align}
		Taking the comparator $\u=(\w_s + \beta\w^\star)$, where $\beta \coloneq \frac{\ln\left(\gamma^2 \sum_{k=s}^{t-1}\eta_k\right)}{\gamma}$, we have the following bound on $\calL(\u)$:
		\begin{align}
			\calL(\u) & = \frac{1}{n}\sum_{i=1}^n\ln\left(1+\exp\left(-y_i\x_i^\top \w_s\right)\exp\left(-y_i\x_i^\top \beta \w^\star\right)\right) \nonumber \\
			& \le \frac{1}{n}\sum_{i=1}^n \exp\left(-y_i\x_i^\top \w_s\right)\exp\left(-y_i\x_i^\top \beta \w^\star\right) \tag{Using $\ln(1+\theta)\le\theta \ \forall \theta \ge 0 $} \nonumber \\
			& \le \frac{1}{n}\sum_{i=1}^n \exp\left(-y_i\x_i^\top \w_s\right)\exp\left(-\beta \gamma\right) \nonumber \tag{Under Assumption \eqref{assumption-gd}} \\
			& = \frac{F(\w_s)}{\gamma^2\sum_{k=s}^{t-1}\eta_k},
		\end{align}
		where $F(\w_s)\coloneq \frac{1}{n}\sum_{i=1}^n \exp\left(-y_i\x_i^\top \w_s\right)$.
		\newline
		Furthermore, from ~\eqref{lemma:monot}, we have the monotonicity relation $\calL(\w_s)\le \calL(\w_{s+1}) \le \ldots \le \calL(\w_{t-1})\le \calL(\w_t)$ when $\calL(\w_k)\le \frac{1}{\eta_k}\ \forall k\in [s, t-1]$. Using these in \eqref{init-sp}, we get the following
		\begin{align}
			\calL(\w_{t}) \le \frac{2F(\w_s) + \ln^2\left(\gamma^2\sum_{k=s}^{t-1}\eta_k\right)}{\gamma^2\sum_{k=s}^{t-1}\eta_k}.
		\end{align}
		
	\end{proof}

\end{document}